\renewcommand{\cite}{\citep}
\title{Improved Regularization and Robustness for Fine-tuning in Neural Networks}
\author{Dongyue Li\\
Northeastern University, Boston\\
\texttt{li.dongyu@northeastern.edu}
\And
Hongyang R. Zhang\\
Northeastern University, Boston\\
\texttt{ho.zhang@northeastern.edu}
}
\begin{document}
	\maketitle
    \begin{abstract}
A widely used algorithm for transfer learning is fine-tuning, where a pre-trained model is fine-tuned on a target task with a small amount of labeled data. When the capacity of the pre-trained model is significantly larger than the size of the target dataset, fine-tuning is prone to overfitting and memorizing the training labels. Hence, a crucial question is to regularize fine-tuning and ensure its robustness against noise. To address this question, we begin by analyzing the generalization properties of fine-tuning. We present a PAC-Bayes generalization bound that depends on the \textit{distance traveled in each layer} during fine-tuning and the \textit{noise stability} of the fine-tuned model. We empirically measure these quantities. Based on the analysis, we propose \textit{regularized self-labeling}---the interpolation between regularization and self-labeling methods, including (i) \textit{layer-wise regularization} to constrain the distance traveled in each layer; (ii) \textit{self-label-correction and label-reweighting} to correct mislabeled data points (that the model is confident) and reweight less confident data points. We validate our approach on an extensive collection of image and text datasets using multiple pre-trained model architectures. Our approach improves  baseline methods by 1.76\% (on average) for seven image classification tasks and 0.75\% for a few-shot classification task. When the target data set includes noisy labels, our approach outperforms baseline methods by an average of 3.56\% in two noisy settings.
\end{abstract}
    \section{Introduction}

Learning from limited or weakly labeled data is a fundamental problem in many real-world applications \cite{ratner2016data,ratner2017snorkel}.
A common approach to address this problem is fine-tuning a large model that has been pre-trained on publicly available labeled data.
Since fine-tuning is typically applied to a target task with limited labels, this procedure is prone to overfitting or memorization in practice.
These issues become worse when the target task contains noisy labels \cite{zhang2016understanding}.
In this paper, we analyze regularization methods for fine-tuning from both theoretical and empirical perspectives.
Based on the analysis, we propose a \textit{regularized self-labeling} approach that improves the generalization and robustness properties of fine-tuning.

Previous works~\cite{li2018delta,li2018explicit} have proposed regularization methods to constrain the distance between a fine-tuned model and the pre-trained model in the Euclidean norm.
\citet{li2020rethinking} provides an extensive study to show that the performance of fine-tuning and the benefit of adding regularization depend on the hyperparameter choices.
\citet{salman2020adversarially} empirically find that performing adversarial training during the pre-training phase helps learn pre-trained models that transfer better to downstream tasks.
The work of \citet{gouk2020distance} generalizes the above ideas to various norm choices and finds that projected gradient descent methods perform well for implementing distance-based regularization.
Additionally, they derive generalization bounds for fine-tuning using Rademacher complexity. 
These works focus on settings where there is no label noise in the target dataset.
When label noise is present, for example, due to the application of weak supervision techniques~\cite{ratner2016data}, a crucial question is to design methods that are robust to such noise.
The problem of learning from noisy labels has a rich history of study in supervised learning \cite{natarajan2013learning}. In contrast, little is known in the transfer learning setting.
These considerations motivate us to analyze the generalization and robustness properties of fine-tuning.

In Section \ref{sec_reg}, we begin by conducting a PAC-Bayesian analysis of regularized fine-tuning.
This is inspired by recent works that have found PAC-Bayesian analysis correlates with empirical performance better than Rademacher complexity \cite{jiang2019fantastic}.
We identify two critical measures for analyzing the generalization performance of fine-tuning.
The first measure is the $\ell_2$ norm of the distance between the pre-trained model (initialization) and the fine-tuned model.
The second measure is the perturbed loss of the fine-tuned model, i.e., its loss after the model weights get perturbed by random noise.
First, we observe that the fine-tuned weights remain close to the pre-trained model. 
Moreover, the top layers travel much further away from the pre-trained model than the bottom layers.
Second, we find that fine-tuning from a pre-trained model implies better noise stability than training from a randomly initialized model. 
In Section \ref{sec_robust}, we evaluate regularized fine-tuning for target tasks with noisy labels.
We find that fine-tuning is prone to memorizing the noisy labels, and regularization helps alleviate such memorization behavior. 
Moreover, we observe that the neural network has not yet overfitted to the noisy labels during the early phase of fine-tuning. Thus, its prediction could be used to relabel the noisy labels.

\begin{wrapfigure}[15]{r}{0.5\textwidth}
    \vspace{-0.75cm}
    \begin{center}
        \includegraphics[width=0.95\linewidth]{./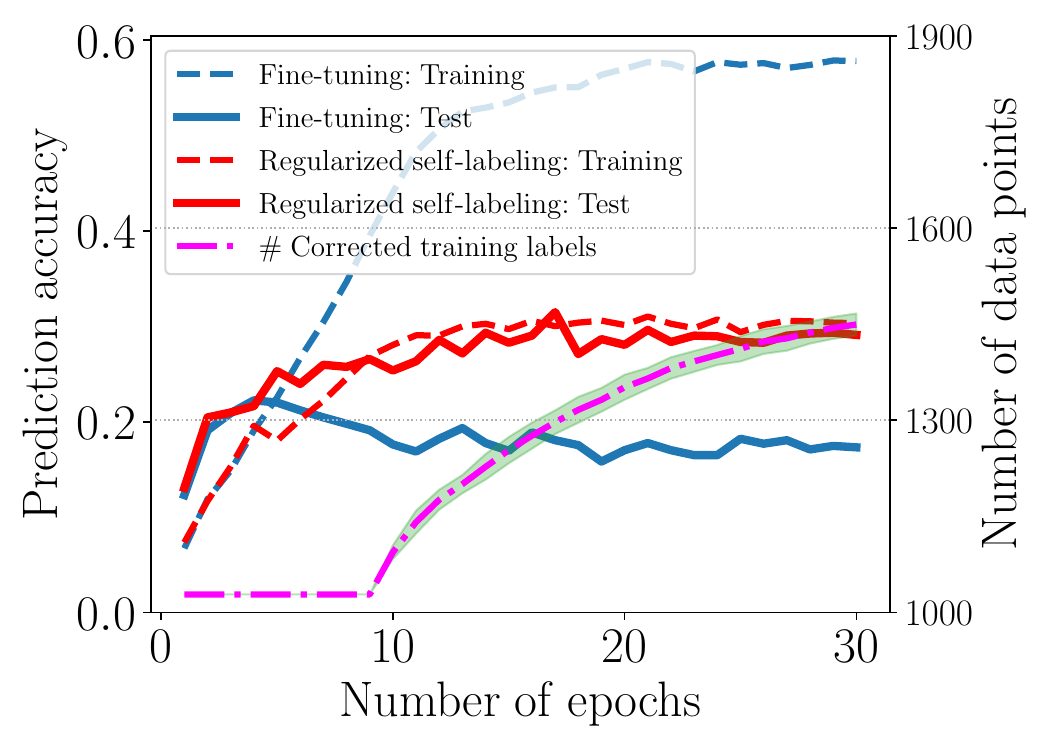}
    \end{center}
    \vspace{-0.5cm}
    \caption{\textcolor{red}{Red:} Layer-wise regularization closes generalization gap. \textcolor{magenta}{Magenta:} Self-labeling relabels noisy data points to their correct label.}
    \label{fig:main_figure}
\end{wrapfigure}
We propose an algorithm that incorporates layer-wise regularization and self-labeling to enhance regularization and robustness, as supported by our results.
Figure \ref{fig:main_figure} illustrates the two components.
First, we encode layer-wise distance constraints to regularize the model weights at different levels. Compared to (vanilla) fine-tuning, our algorithm reduces the gap between the training and test accuracy, thus alleviating overfitting.
Second, we introduce a self-labeling mechanism that corrects and reweights noisy labels based on the neural network's predictions.
Figure \ref{fig:main_figure} shows that our algorithm effectively hinders the model from learning the incorrect labels by relabeling them to correct ones.

In Section \ref{sec_exp}, we evaluate our proposed algorithm for both transfer learning and few-shot classification tasks with image and text data sets.
First, using ResNet-101 \cite{he2016deep} as the pre-trained model, our algorithm outperforms previous fine-tuning methods on seven image classification tasks by an average of $1.76\%$ and $3.56\%$ when their labels are noisy, respectively.
Second, we find qualitatively similar results for applying our approach to medical image classification tasks (ChestX-ray14 \cite{wang2017chestx, rajpurkar2017chexnet}) and vision transformers \cite{dosovitskiy2020image}.
Finally, we extend our approach to few-shot learning and sentence classification. For these related but different tasks and data modalities,
we find an improvement of $0.75\%$ and $0.46\%$ over previous methods, respectively.

In summary, our contributions are threefold.
First, we provide a PAC-Bayesian analysis of regularized fine-tuning.
Our result implies empirical measures that explain the generalization performance of regularized fine-tuning. 
Second, we present a regularized self-labeling approach to enhance the generalization and robustness properties of fine-tuning.
Third, we validate our approach on an extensive collection of classification tasks and pre-trained model architectures.

    \section{Related Work}

Fine-tuning is widely used in multi-task and transfer learning, meta-learning, and few-shot learning.
Previous works \cite{li2018explicit,li2018delta} find that injecting $\ell_2$ regularization helps improve the empirical performance of fine-tuning.
\citet{li2018explicit} propose a $\ell_2$ distance regularization method that penalizes the $\ell_2$ distance between the fine-tuned weights and the pre-trained weights.
\citet{li2018delta} penalize the distance between the \textit{feature maps} as opposed to the layer weights.
\citet{chen2019catastrophic} design a regularization method that suppresses the spectral components (of the \textit{feature maps}) with small singular values to avoid negative transfer.
\citet{gouk2020distance} instead encode distance constraints in constrained minimization and uses projected gradient descent to ensure the weights are close to the pre-trained model. \citet{salman2020adversarially} show that fine-tuning from adversarially robust pre-trained models outperforms fine-tuning from (standard) pre-trained models.

The robustness of learning algorithms in the presence of label noise has been extensively studied in supervised learning~\cite{natarajan2013learning}.
Three broad ideas for designing robust algorithms include defining novel losses, identifying noisy labels, and using regularization methods. \citet{zhang2018generalized} design the robust Generalized Cross Entropy~(GCE) loss which is a mixture of Cross Entropy~(CE) and mean absolute error. The Symmetric Cross Entropy~(SCE)~\cite{wang2019symmetric} loss combines reverse cross-entropy with the CE loss. \citet{ma2020normalized} proposes normalizing the loss to be robust to noisy labels and combines active and passive loss functions.
\citet{thulasidasan2019combating} propose DAC, which identifies and suppresses the signals of noisy samples by abstention-based training. 
\citet{liu2020early} introduce an early learning regularization approach to mitigate label memorization.
\citet{huang2020self} propose a self-adaptive training method that corrects noisy labels and reweights training data to suppress erroneous signals.
These works primarily concern the supervised learning setting.
To the best of our knowledge, fine-tuning algorithms under label noise are relatively under-explored in transfer learning.
Our approach draws inspiration from the semi-supervised learning literature, which has evaluated pseudo-labeling and self-training approaches given a limited amount of labeled data and a large amount of unlabeled data. 
Recent work considers a sharpness-aware approach and evaluates its performance for fine-tuning from noisy labels.
It would be interesting to explore whether combining their approach with our ideas could yield better results.

From a theoretical perspective, the work of \citet{ben2010theory} considers a setting where labeled data from many source tasks and a target task are available.
They demonstrate that minimizing a weighted combination of the source and target empirical risks yields the best result.
In the supervised setting, \citet{arora2019fine} provide data-dependent generalization bounds for neural networks based on the noise resilience of the trained network.
\citet{nagarajan2018deterministic} provide improved generalization bounds that depend only polynomially on the depth of the neural network characterized by a margin condition.
Recent work has found that the PAC-Bayes theory provides generalization measures that correlate more closely with empirical generalization performance than other alternatives \cite{jiang2019fantastic}.
We defer a more extensive review of PAC-Bayesian generalization theory to Section \ref{sec:proof_details}.

	\section{Preliminaries}\label{sec_setup}

\textbf{Problem setup.} We begin by formally introducing the  setup.
Suppose we would like to solve a target task.
We have a training data set of size $n^{(t)}$.
Let $(x^{(t)}_1, y^{(t)}_1), \dots, (x^{(t)}_{n^{(t)}}, y^{(t)}_{n^{(t)}})$ be the feature vectors and the labels of the training data set.
We assume that every $x^{(t)}_i$ lies in a $d$-dimensional space denoted by $\cX\subset\real^{d}$.
Following standard terminologies in statistical learning, we assume all data are drawn from some unknown distribution supported on $\cX\times\cY$, where $\cY \subseteq \real$ is the label space.
Denote the underlying data distribution as $\cP^{(t)}$.

For our result in Section \ref{sec_reg}, we consider feedforward neural networks, and our results can also be extended to convolutional neural networks (see, e.g., \citet{long2020generalization}). %
Consider an $L$ layer neural network.
For each layer $i$ from $1$ to $L$, let $\psi_i$ be the activation function and let $W_i$ be the weight matrix at layer $i$.
Given an input $z$ to the $i$-th layer, the output is then denoted as $\phi_i(z) = \psi_i(W_i z)$.
Thus, the final output of the network is given by
\[ f_W(x) = \phi_L \circ \phi_{L-1} \circ \cdots \phi_1(x), \text{ for any input } x \in \cX, \]
where we use $W = [W_1, \dots, W_L]$ to include all the parameters of the network for ease of notation.
The prediction error of $f_W$ is measured by a loss function $\ell(\cdot)$ that is both convex and $1$-Lipschitz-continuous.
\begin{align}\label{eq_pred}
	\cL^{(t)}(f_W) = \exarg{(x, y) \sim \cP^{(t)}}{\ell(f_W(x), y)}.
\end{align}
Suppose we have access to a pre-trained source model $\hat{W}^{(s)}$.
Using $\hat{W}^{(s)}$ as an initialization, we then fine-tune the layers $\hat{W}^{(s)}_1, \dots, \hat{W}^{(s)}_L$ to predict the target labels.

\textbf{Applications.}
We consider seven image classification data sets described in Table \ref{tab: dataset_statistics}.
We use ResNets pre-trained on ImageNet as the initialization $\hat{W}^{(s)}$~\cite{russakovsky2015imagenet,he2016deep}.
We perform fine-tuning using the pre-trained network on the above data sets.
See Section \ref{sec_exp_setup} for further description of the training procedure.

\begin{table*}[!t]
\centering
\caption{Basic statistics for seven image classification tasks.}\label{tab: dataset_statistics}
\begin{small}
\begin{tabular}{@{}llcccc@{}}
\toprule
Datasets                      & Training & Validation & Test & Classes \\ \midrule
Aircrafts~\cite{maji2013fine}     & 3334  & 3333       & 3333 & 100     \\
CUB-200-2011~\cite{WahCUB_200_2011} & 5395  & 599        & 5794 & 200     \\
Caltech-256~\cite{griffin2007caltech}      & 7680  & 5120       & 5120 & 256     \\
Stanford-Cars~\cite{stanford_cars2013}   & 7330  & 814        & 8441 & 196     \\
Stanford-Dogs~\cite{stanford_dogs2011}   & 10800 & 1200       & 8580 & 120     \\
Flowers~\cite{nilsback2008automated}   & 1020  & 1020       & 6149 & 102     \\
MIT-Indoor~\cite{sharif2014cnn}        & 4824  & 536        & 1340 & 67      \\ \bottomrule
\end{tabular}
\end{small}
\vspace{-0.1in}
\end{table*}

\textbf{Modeling label noise.}
In many settings, fine-tuning is applied to a target task whose labels may contain noise; for example, if the target labels are created using weak supervision techniques \cite{ratner2019training,saab2021observational}.
To capture such settings, we denote a noisy data set as $(x^{(t)}_1, \tilde{y}^{(t)}_1), \dots, (x^{(t)}_{n^{(t)}}, \tilde{y}^{(t)}_{n^{(t)}})$, where $\tilde{y}_{i}^{(t)}$ is a noisy version of $y_{i}^{(t)}$.
We consider two types of label noise: \textit{independent} noise and \textit{correlated} noise.
We say that the label noise is \textit{independent} if it is independent of the input feature vector
\begin{equation*}
	\Pr(\tilde{y}_{i}^{(t)} = k | {y}_{i}^{(t)} = j, x_{i}^{(t)}) = \Pr(\tilde{y}_{i}^{(t)} = k | {y}_{i}^{(t)} = j) = \eta_{j,k}
\end{equation*}
for some fixed $\eta_{j,k}$ between $0$ and $1$.
On the other hand, we say that the label noise is \textit{correlated} if it depends on the input feature vector (i.e. the above equation does not hold).
	\section{Our Proposed Approaches}\label{sec_theory}

Given the problem setup described above, next, we study the generalization and robustness properties of regularized fine-tuning methods.
First, we present a PAC-Bayes generalization bound for regularized fine-tuning. This result motivates us to evaluate two empirical measures: the fine-tuned distance in each layer and the perturbed loss of the fine-tuned model.
Second, we consider fine-tuning from noisy labels. We demonstrate that layer-wise regularization prevents the model from memorizing noisy labels.
We then suggest injecting predictions of the model during training, similar to self-training and pseudo-labeling.
Finally, we incorporate both components into our \textit{regularized self-labeling} approach, blending the strengths of both to improve the generalization performance and the robustness of fine-tuning.

\subsection{Fine-tuning and regularization}\label{sec_reg}

We consider the following regularized fine-tuning problem, which constrains the network from traveling too far from the pre-trained initialization $\hat{W}^{(s)}$ \cite{li2018delta,li2018explicit,gouk2020distance}.
\begin{align}
	\hat{W} \leftarrow\arg\min&~ \hat{\cL}^{(t)}(f_W) \label{eq_reg}\\
	\mbox{ s.t. } & \normFro{W_i - \hat{W}_i^{(s)}} \le D_i,~\forall\, i = 1,\dots,L. \label{eq_constraint}
\end{align}

 \begin{wrapfigure}[13]{r}{0.46\textwidth}
 \vspace{-0.3in}
   \begin{center}
   \includegraphics[width=0.9\linewidth]{./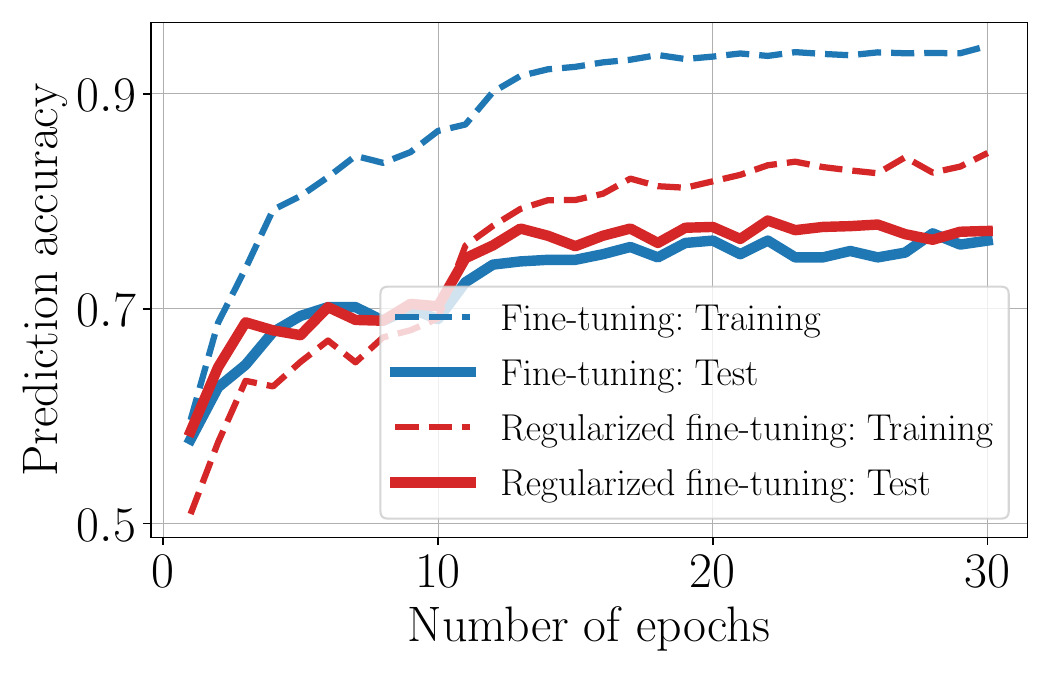}
   \end{center}\vspace{-0.13in}
   \caption{The training and test accuracy of fine-tuning using early stopping vs. optimizing \eqref{eq_reg} and \eqref{eq_constraint} on the Indoor data set.}
 \label{fig:finetune_no_noise}
 \end{wrapfigure}
Above, $D_i$ is a hyperparameter that constrains how far the $i$-th layer $W_i$ can travel from the pre-trained initialization $\hat{W}_i^{(s)}$.
Previous works have observed that stronger regularization reduces the generalization gap during fine-tuning (cf. Figure \ref{fig:finetune_no_noise}). 
Next, we analyze the generalization error of $\hat{W}$, that is, $\cL^{(t)}(f_{\hat W}) - \hat\cL^{(t)}(f_{\hat W})$, where $\cL^{(t)}(f_{\hat W})$ is the test loss of $f_{\hat W}$ according to equation \eqref{eq_pred} and $\hat\cL^{(t)}(f_{\hat W})$ is the empirical loss of $f_{\hat{W}}$ on the training data set.

\noindent\textbf{PAC-Bayesian analysis.} We begin by analyzing the generalization error of $f_{\hat W}$ using PAC-Bayesian \clearpage
tools \cite{mcallester1999pac,mcallester1999some}.
This departs from the previous work of \citet{gouk2020distance}, which is based on their analysis using Rademacher complexity.
This change of perspective is inspired by several recent works that have found PAC-Bayesian bounds to better correlate with empirical performance than Rademacher complexity bounds \cite{jiang2019fantastic}.
We refer the interested reader to Section \ref{sec:proof_details} for further references from this line of work.
After presenting our results, we provide empirical measures of our results and discuss the practical implications.

\begin{theorem}[PAC-Bayes generalization bound for fine-tuning]\label{thm_fnn}
Suppose for every $i = 1,\dots,L$, $\norm{\hat{W}_i^{(s)}}_2 \le B_i$, for a fixed $B_i > 1$.
Suppose the feature vectors in the domain $\cX$ are all bounded: $\norm{x}_2 \le C_1$ for every $x\in\cX$, for some $C_1 \ge 1$.
Finally, suppose the loss function $\ell(\cdot)$ is 1-Lipschitz and bounded from above by a fixed constant $C_2$.
Under these conditions, let $f_{\hat{W}}$ be the minimizer of regularized fine-tuning, solved from problem \eqref{eq_constraint}.
Let $\varepsilon > 0$ be an arbitrary small value.
Let $H$ be the maximum over the width of all the $L$ layers and the input dimension $d$.
Then, with probability at least $1 - 2\delta$ for some small $\delta > 0$, the expected loss $\cL^{(t)}(f_{\hat W})$ is upper bounded by
{%
\begin{align}\label{eq_pac_main}
	\resizebox{.93\hsize}{!}{$\cL^{(t)}(f_{\hat{W}}) \le \hat{\cL}^{(t)}(f_{\hat{W}}) 
	+ \varepsilon + C_2\sqrt{\frac{{{ \frac{36}{\varepsilon^2} \cdot C_1^2 H \log(4LH C_2) \Big(\sum_{i=1}^L \frac{\prod_{j=1}^L (B_j + D_j)}{B_i + D_i} \Big)^2 \Big(\sum_{i=1}^L D_i^2\Big)}  {} + 3\ln{\frac {n^{(t)}} {\delta}}} + 8}{{n^{(t)}}}}.$}
\end{align}}
\end{theorem}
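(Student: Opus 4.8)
The plan is to follow the PAC-Bayesian margin framework (building on \cite{mcallester1999pac,mcallester1999some}), with the crucial modification that the prior is centered at the \emph{pre-trained} initialization $\hat{W}^{(s)}$ rather than at the origin; this is precisely what lets the bound scale with the fine-tuned distances $D_i$ instead of the absolute weight magnitudes. Concretely, I would take the prior $P = \mathcal{N}(\hat{W}^{(s)}, \sigma^2 I)$ and the posterior $Q = \mathcal{N}(\hat{W}, \sigma^2 I)$ with a common isotropic variance $\sigma^2$ to be chosen later. Since the two Gaussians differ only in their means, the KL divergence collapses to $\mathrm{KL}(Q \,\|\, P) = \frac{1}{2\sigma^2}\sum_{i=1}^L \normFro{\hat{W}_i - \hat{W}_i^{(s)}}^2 \le \frac{1}{2\sigma^2}\sum_{i=1}^L D_i^2$, where the inequality invokes the hard constraints \eqref{eq_constraint}. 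Plugging this into McAllester's inequality reduces the problem to controlling (i) the gap between the perturbed loss $\exarg{U}{\hat{\cL}^{(t)}(f_{\hat{W}+U})}$ and the deterministic loss of $f_{\hat W}$, and (ii) the value of $\sigma$.

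The technical heart is a perturbation lemma bounding how far the network output moves when each weight matrix $W_i$ is perturbed by $U_i$. Proceeding layer by layer and using that the spectral norm satisfies $\norm{W_i}_2 \le \norm{\hat{W}_i^{(s)}}_2 + \normFro{W_i - \hat{W}_i^{(s)}} \le B_i + D_i$ together with the Lipschitzness of the activations $\psi_i$, an inductive (telescoping) argument of the type in spectrally-normalized margin analyses yields an output deviation of order $e\, C_1 \prod_{j=1}^L \norm{W_j}_2 \sum_{i=1}^L \frac{\norm{U_i}_2}{\norm{W_i}_2} = e\, C_1 \sum_{i=1}^L \norm{U_i}_2 \prod_{j \neq i} \norm{W_j}_2$ for inputs with $\norm{x}_2 \le C_1$. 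Bounding each residual product $\prod_{j \neq i}\norm{W_j}_2 \le \frac{\prod_{j=1}^L (B_j + D_j)}{B_i + D_i}$ is exactly what produces the factor $\sum_{i=1}^L \frac{\prod_{j=1}^L (B_j + D_j)}{B_i + D_i}$ in the final bound. Substituting the high-probability spectral-norm bound for the Gaussian blocks, $\norm{U_i}_2 = O(\sigma \sqrt{H})$ (up to logarithmic factors, union-bounded over the $L$ layers), converts the deviation into $O\!\big(C_1 \sigma \sqrt{H}\, \sum_{i} \frac{\prod_j (B_j+D_j)}{B_i+D_i}\big)$; this, together with the range $C_2$ of the loss, is the source of the $\log(4LH C_2)$ term.

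Given the perturbation lemma, I would choose $\sigma$ as large as possible subject to the expected output deviation being at most $\varepsilon$; since the loss is $1$-Lipschitz, this guarantees the perturbed population and empirical losses each differ from those of $f_{\hat{W}}$ by at most $\varepsilon$ on the good event, while the complementary low-probability event is absorbed using the uniform bound $C_2$ on the loss. Solving the constraint gives $\sigma \approx \frac{\varepsilon}{C_1 \sqrt{H}\, \sum_{i} \frac{\prod_j (B_j + D_j)}{B_i + D_i}}$, and substituting this choice into $\mathrm{KL}(Q\,\|\,P) \le \frac{1}{2\sigma^2}\sum_i D_i^2$ produces precisely the leading term $\frac{36}{\varepsilon^2} C_1^2 H \log(4LH C_2)\big(\sum_i \frac{\prod_j (B_j+D_j)}{B_i + D_i}\big)^2 \big(\sum_i D_i^2\big)$ inside the square root, with the numerical constant $36$ absorbing the $e$-factor from the telescoping step and the spectral-norm concentration constant. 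The remaining $3\ln(n^{(t)}/\delta) + 8$ additive terms and the $C_2$ prefactor come from the standard McAllester inequality after rescaling the loss to $[0,1]$, and the two high-probability events (spectral concentration of $U$ and the PAC-Bayes inequality) are combined by a union bound, accounting for the overall $1 - 2\delta$ confidence.

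The step I expect to be the main obstacle is the perturbation lemma: carefully tracking the layerwise constants so that the clean product/sum structure emerges, and reconciling the \emph{high-probability} spectral bound on $U_i$ with the \emph{in-expectation} quantity demanded by the PAC-Bayes theorem (handled by conditioning on the good spectral event and bounding the negligible tail through $C_2$). A secondary subtlety is that $\sigma$ must be fixed independently of the learned weights; here the hard constraints $D_i$ make all relevant spectral norms known a priori through $B_i + D_i$, so no data-dependent discretization of $\sigma$ is required beyond the logarithmic grid already implicit in the $\log(4LH C_2)$ factor.
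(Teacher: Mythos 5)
Your proposal follows essentially the same route as the paper's proof: McAllester's PAC-Bayes bound with the prior $\mathcal{N}(\hat{W}^{(s)},\sigma^2 I)$ and posterior $\mathcal{N}(\hat{W},\sigma^2 I)$ so that the KL collapses to $\frac{1}{2\sigma^2}\sum_i D_i^2$, a layer-wise telescoping perturbation lemma with high-probability spectral bounds $\|U_i\|_2 \lesssim \sigma\sqrt{H\log(LH/\delta)}$ whose tail is absorbed via the uniform bound $C_2$ (the paper makes this explicit by writing the perturbed loss as $(1-\delta)(\cdot)+\delta C_2$ and optimizing over $\delta$, which is where $\log(4LHC_2)$ arises), and the same choice of $\sigma \approx \varepsilon/(C_1\sqrt{H}\,\alpha)$ with $\alpha=\sum_i \prod_j(B_j+D_j)/(B_i+D_i)$. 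You also correctly identify the two genuine subtleties (high-probability versus in-expectation, and fixing $\sigma$ a priori via $B_i+D_i$), both of which the paper resolves exactly as you describe.
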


\textit{Relation to prior works.} Compared to the result of \citet{gouk2020distance}, we instead proceed by factoring the generalization error into a noise error (i.e., the $\varepsilon$ term in equation \eqref{eq_pac_main}) and a KL-divergence between $\hat{W}^{(s)}$ and $\hat{W}$ (i.e., the final term in equation \eqref{eq_pac_main}).
The proof of Theorem \ref{thm_fnn} is based on \citet{neyshabur2017pac} and is presented in Appendix \ref{sec:proof_details}.
The difference between Theorem \ref{thm_fnn} and the result of \citet{neyshabur2017pac} is that our result is stated for the  (e.g. cross-entropy) loss function $\ell(\cdot)$ whereas \citet{neyshabur2017pac} states the result using the soft margin loss.

Our result suggests that the \textit{fine-tuned distances} $\set{D_i}_{i=1}^L$ and the \textit{perturbed loss} (more precisely $\exarg{U}{\ell(f_{\hat W + U}(x), y)}$ where every entry of $U$ is drawn from $\cN(0, \sigma^2)$) are two important measures for fine-tuning.
Next, we empirically evaluate these two measures using real-world datasets.

{\small
\begin{figure*}[!t]
    \begin{minipage}[b]{0.48\textwidth}
    	\centering
	    \includegraphics[width=0.95\textwidth]{./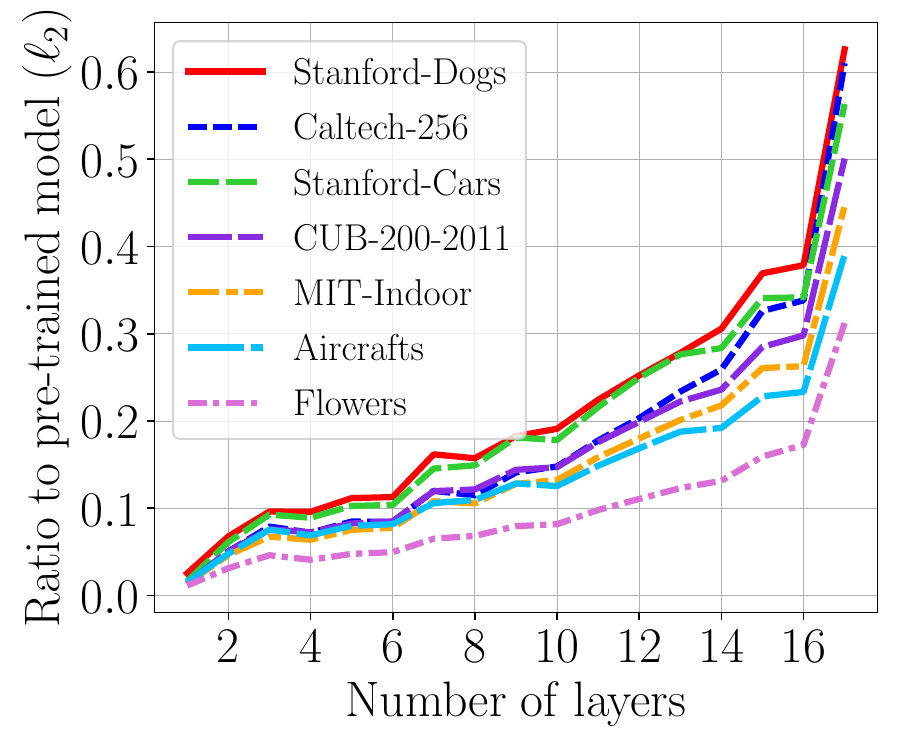}
	    \subcaption{Fine-tuned distances.}
    	\label{fig:layer_wise_dis}
    \end{minipage}\hfill
    \begin{minipage}[b]{0.48\textwidth}
        \centering
        \begin{small}
        \begin{tabular}{@{}ccccc@{}}
        \toprule
              & \multicolumn{3}{c}{CUB-200-2011}  \\ 
        $\sigma$ & Random        & Pre-trained       & Adversarial \\ \midrule
        $10^{-2}$      & 3.77$\pm$0.42 & \textbf{1.45$\pm$0.13} & 1.76$\pm$0.09 \\
        $10^{-3}$      & 0.82$\pm$0.07 & 0.62$\pm$0.03 & \textbf{0.54$\pm$0.03} \\
        $10^{-4}$      & 0.81$\pm$0.04 & 0.61$\pm$0.03 & \textbf{0.61$\pm$0.01} \\ \midrule
              & \multicolumn{3}{c}{Indoor} \\
        $\sigma$ & Random       & Pre-trained       & Adversarial\\ \midrule
        $10^{-2}$      & 2.51$\pm$0.34  & 1.11$\pm$0.09 & \textbf{0.97$\pm$0.07} \\
        $10^{-3}$      & 0.49$\pm$0.09  & 0.36$\pm$0.05 & \textbf{0.32$\pm$0.04} \\
        $10^{-4}$      & 0.44$\pm$0.03  & 0.33$\pm$0.02 & \textbf{0.30$\pm$0.04} \\
              \bottomrule
        \end{tabular}
        \end{small}
        \vspace{0.3in}
        \subcaption{Perturbed loss.}
        \label{fig:stability}
    \end{minipage}
    \caption{Left: Ratio between the $\ell_2$-norm of the fine-tuned distances and the pre-trained network.
    Right: Perturbed loss of fine-tuned model from random initializations (Random), pre-trained model initializations (Pre-trained), adversarially trained model initializations (Adversarial). Results are shown for the CUB-200-2011 and the Indoor data sets using ResNet-18, averaged over {10} runs.}
    \vspace{-0.15in}
\end{figure*}}

\textbf{Fine-tuned distances $\bm{\set{D_i}_{i=1}^L}$.}
First, we present our empirical findings for the fine-tuned distances in each layer. We fine-tune a ResNet-18 model (pre-trained on ImageNet) on seven data sets and calculate the Frobenius distance between $\hat{W}^{(s)}_i$ and $\hat{W}_i$ for every layer $i$. 

Figure~\ref{fig:layer_wise_dis} shows the ratio of the fine-tuned distances to the pre-trained weights in each layer.
First, we observe that the fine-tuned distances are relatively small compared to the pre-trained network.
This means that fine-tuning stays within a small local region near the pre-trained model.
Second, we find that $D_i$ varies across layers. 
$D_i$ is smaller at lower layers and is larger at higher layers.
This observation aligns with the folklore intuition that different layers in Convolutional neural networks play a different role~\cite{guo2019spottune}.
Bottom layers extract higher-level representations of features and stay close to the pre-trained model.
The top layers extract class-specific features and vary among different tasks. Thus, the top layers travel further away from the pre-trained model.

Based on the above empirical findings, we propose layer-wise constraints as regularization. In particular, the constraints for the bottom layers should be small, and the constraints for the top layers should be large.
In practice, we find that an exponentially increasing scheme involving a base distance parameter $D$ and a scale factor $\gamma$ works well.
This is summarized in the following regularized fine-tuning problem:
\begin{align}
	\hat{W} \leftarrow\arg\min&~ \hat{\cL}^{(t)}(f_W) \label{eq_reg_2} \\
	\mbox{ s.t. } & \normFro{W_i - \hat{W}_i^{(s)}} \le D \cdot \gamma^{i-1},~\forall\, i = 1,\dots,L. \label{eq_constraint_lw}
\end{align}
where the scale factor $\gamma > 1$ according to Figure \ref{fig:layer_wise_dis}.
The previous work of~\citet{gouk2020distance} uses the same distance parameter in equation \eqref{eq_constraint_lw}, i.e., $D_i = D,\ \forall i = 1, \ldots, L$.
In our experiments, we have observed that such constant regularization constraints are only active at the top layers.
With layer-wise regularization, we instead extend the constraints to the bottom layers as well.
In Table \ref{tab:distances} (see Section \ref{sec:ablation_study}), we compare the value of $\sum_{i=1}^L D_i^2$ between constant regularization and layer-wise regularization.
We find that layer-wise regularization leads to a smaller value of $\sum_{i=1}^L D_i^2$, thus indicating a tighter generalization bound according to Theorem \ref{thm_fnn}.

\begin{figure*}[!t]
    \begin{subfigure}[b]{0.49\textwidth}
    	\centering
	    \includegraphics[width=0.8\textwidth]{./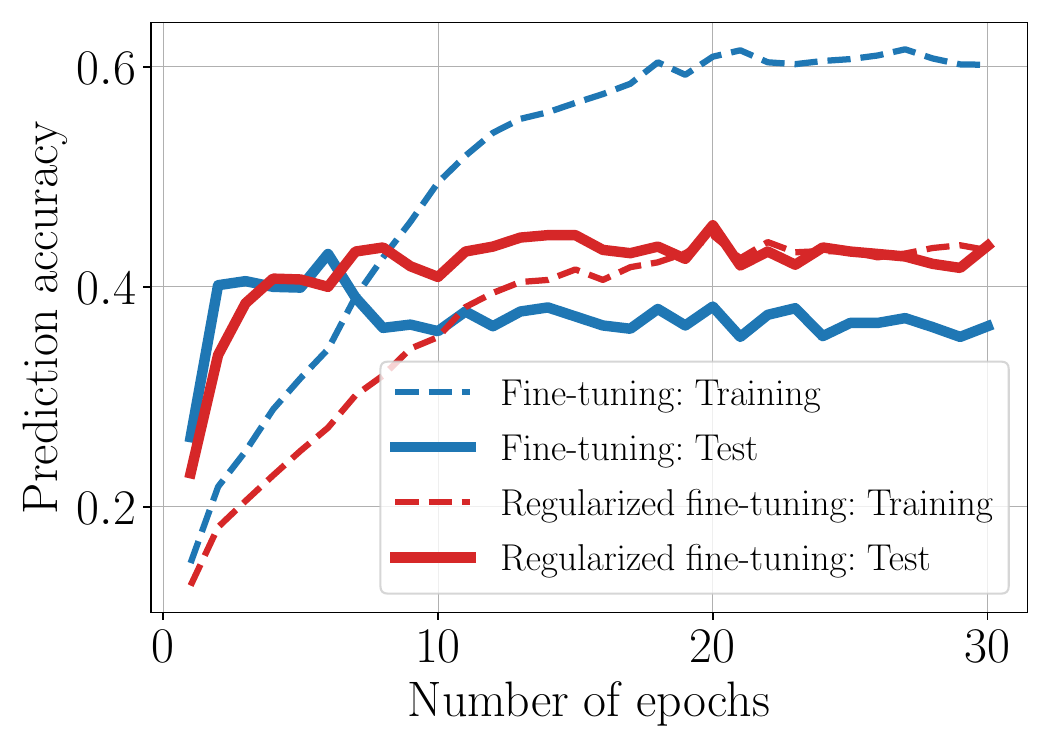}
	    \caption{Noise rate $\eta$ = 60\%.}
    \end{subfigure}
    \begin{subfigure}[b]{0.49\textwidth}
        \centering
        \includegraphics[width=0.8\textwidth]{./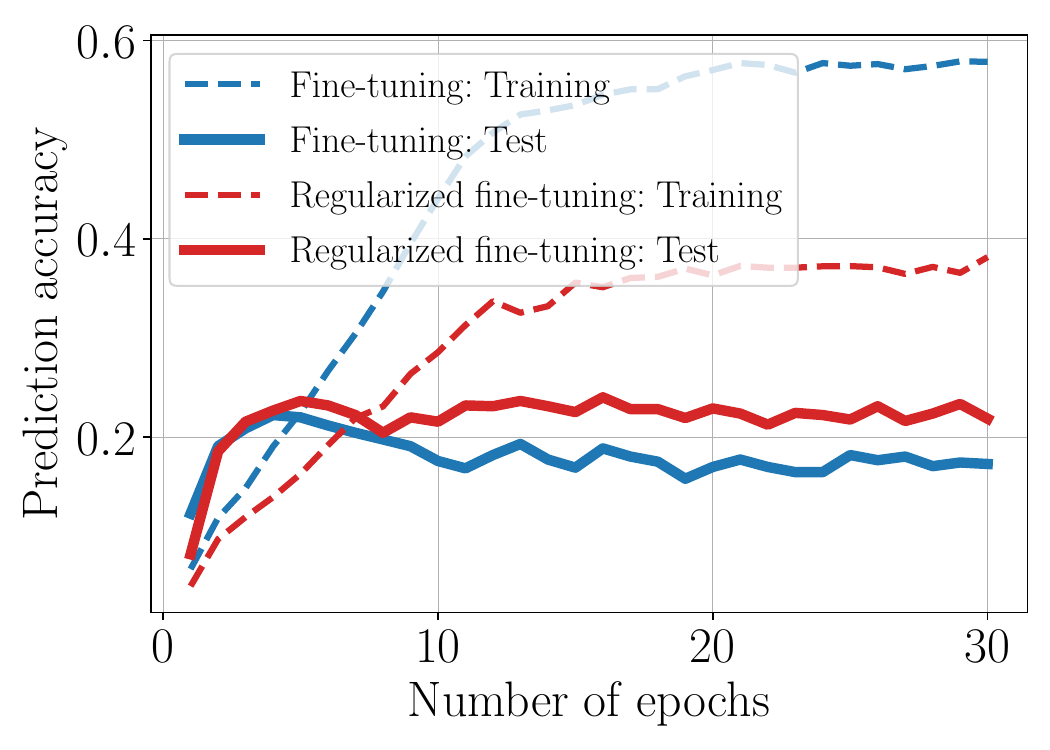}
        \caption{Noise rate $\eta$ = 80\%.}
    \end{subfigure}
    \caption{Training and test accuracy of fine-tuning using early stopping and optimizing equations \eqref{eq_reg_2} and \eqref{eq_constraint_lw} on the Indoor data set with different levels of noise rate. Stronger regularization reduces the generalization gap in both settings. }\label{fig:memorization}
    \vspace{-0.1in}
\end{figure*}

\textbf{Perturbed loss $\bm{\exarg{U}{\ell(f_{\hat W + U}(x), y)}}$.}
Second, we compare the perturbed loss of models fine-tuned from random, pre-trained, and adversarially robust pre-trained model initializations. 
The perturbed loss is measured by the average loss on the training set, after perturbing each entry of the layer weights by a Gaussian random variable $N(0, \sigma^2)$.

In Table \ref{fig:stability}, we report the perturbed losses for both the CUB-200-2011 and the Indoor data set. The results show that models fine-tuned from pre-trained initializations are more stable than models fine-tuned from random initializations.

\textit{Explaining why adversarial robustness helps fine-tuning.}
Recent work \cite{salman2020adversarially} found that performing adversarial training in the pre-training phase leads to models that transfer better to downstream tasks.
More precisely, the adversarial training objective is defined as
\begin{align}\label{eq_adv}
    \tilde{\cL}^{(s)}_{\adv}(\theta) = \min_{W} \frac{1}{n^{(s)}} \sum_{i=1}^{n^{(s)}} \max_{\norm{\delta}_2 \le \varepsilon} \ell(f_W(x_i^{(s)} + \delta), y_i^{(s)}).
\end{align}
Let $\hat{W}^{(s)}_{\adv}$ be a fine-tuned neural network using a pre-trained model from adversarial training.
We measure the perturbed loss of $\hat{W}^{(s)}_{\adv}$ using the pre-trained model provided by \citet{salman2020adversarially} on the CUB-200-2011 and Indoor datasets. Table \ref{fig:stability} shows that $f_{\hat{W}^{(s)}_{\adv}}$ often incurs lower perturbed losses than models fine-tuned from random and pre-trained initializations.
In Section \ref{sec:ablation_study}, we additionally observe that the layers of $\hat{W}^{(s)}_{\adv}$ generally have lower Frobenious norms compared to $\hat{W}^{(s)}$.
Thus, the improved noise stability and the smaller layer-wise norms together imply a tighter generalization bound for adversarial training.
We leave a thorough theoretical analysis of adversarial training in the context of fine-tuning to future works.

\subsection{Fine-tuning and robustness}\label{sec_robust}

Next, we extend our approach to fine-tuning from noisy labels.
We compare fine-tuning with and without regularization on a target task with independent label noise, which motivates our approach.
In Figure \ref{fig:memorization}, we plot the training and test accuracy curves on the Indoor dataset with two noise rates.
We observe that the test accuracy increases initially, indicating that the network learns from the correct labels during the first few epochs. As the learning process progresses, the test accuracy of fine-tuning decreases, indicating that the network is overfitting to noisy labels.
Furthermore, the gap between the training and test accuracy curves exceeds 20\%.
Similar observations have been presented in prior works~\cite{zhang2016understanding,li2018algorithmic,liu2020early}, where over-parametrized neural networks can memorize the entire training set.
One way to mitigate memorization is to reduce model capacity explicitly \cite{WZR20,yang2020analysis}.
Another approach is via explicit regularization, as the distance constraint significantly reduces the training-test accuracy gap and improves test accuracy by a substantial margin.
Based on our empirical observation, the neural network has some discriminating power during the early fine-tuning phase.
Thus, we can leverage the model predictions to relabel incorrect labels and reweight data points with incorrect labels. We describe the two components next.

\noindent\textbf{Self label-correction.}
We propose a label correction step to augment the number of correct labels. We leverage the discriminating power of a network during the early phase and correct data points for which the model has high confidence.
Concretely, let $\bm{p_i}$ be the prediction of the $i$-th data point.
Given a confidence threshold $p_t$ at epoch $t$, if $\max(\bm{p_i}) \geq p_t$ and $\arg \max({\bm{p_i}}) \neq \tilde{y}^{(t)}_i$, we identify data point $i$ as noisy and relabel it to $$\tilde{y}^{(t)}_i = \arg \max({\bm{p_i}}).$$
Figure \ref{fig:label_correction} illustrates the precision of this step.
We define precision as the number of correctly relabeled data points divided by the total number of data points whose labels are relabeled.
We observe that precision is high at the beginning (around $60\%$) and gradually decreases in later epochs. Moreover, we find that precision increases with regularization.
This suggests an intricate interaction between regularization and self-labeling during fine-tuning. We elaborate on the role of regularization in label correction by looking at the number of relabeled data points in Section \ref{sec:ablation_study}.

\begin{figure*}[!t]
    \begin{subfigure}[b]{0.49\textwidth}
    	\centering
	    \includegraphics[width=0.8\textwidth]{./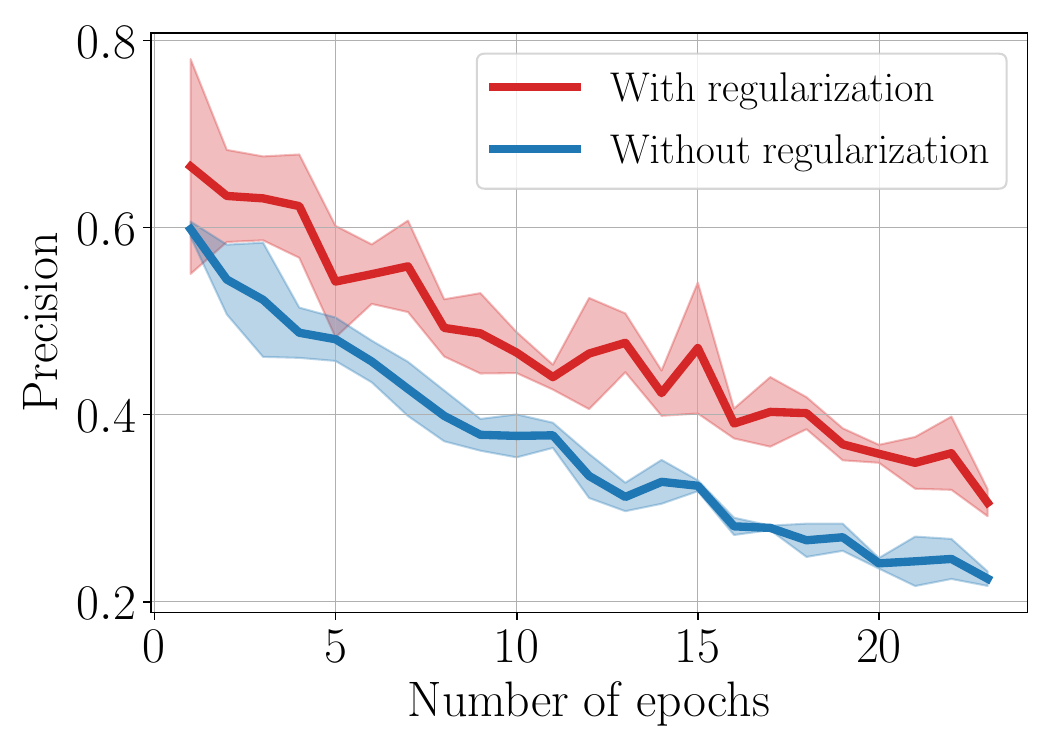}
	    \caption{Precision of self label-correction.}\label{fig:label_correction}
    \end{subfigure}
    \begin{subfigure}[b]{0.49\textwidth}
        \centering
        \includegraphics[width=0.8\textwidth]{./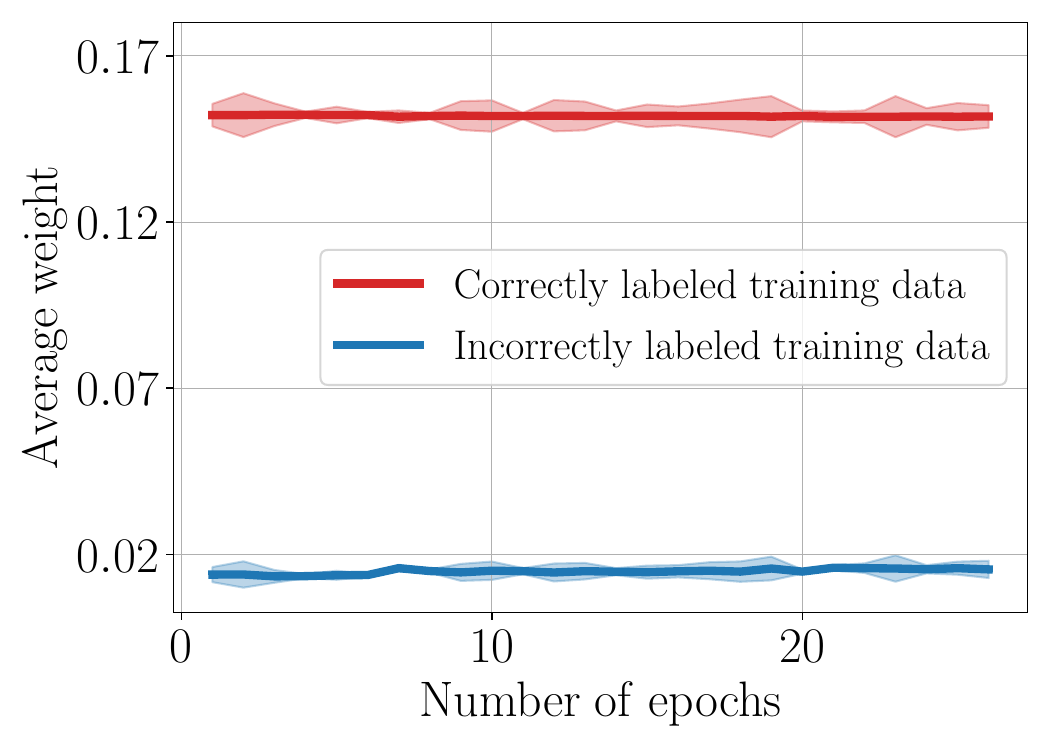}
        \caption{Average weight injected by self-label reweighting.}\label{fig:label_removal}
    \end{subfigure}
    \caption{Left: Comparing the precision of self-labeling with vs. without regularization. Precision is defined as the number of data points whose labels are corrected (cf. Line \ref{line_cor} in Algorithm \ref{alg:rnc}) divided by the number of data points whose labels are changed at every epoch. Right: Comparing the average normalized weight (cf. Line \ref{eq_rew} in Algorithm \ref{alg:rnc}) of training data points with the correct label vs. data points with an incorrect label.}
\end{figure*}

\noindent\textbf{Self-label-reweighting.} We incorporate a soft data removal step to prevent the model from overfitting to noisy labels. 
We identify data points as noisy if their loss values are large and thus down-weight them during training.
The model will prioritize data points with correct labels in its gradient by assigning smaller weights to data points with large losses.
More precisely, given a data point $(x^{(t)}_i, \tilde{y}^{(t)}_i)$, we reweight it by $\omega_i = \exp(-\ell(f_W(x^{(t)}_i), \tilde{y}^{(t)}_i)/\tau)$ where $\tau$ is a temperature parameter. 
In the implementation, we normalize the weights of every data point in every mini-batch $B$:
\begin{equation*}
	\hat\cL_B(f_W) = \frac{1}{\sum_{i\in B} \omega_i} \sum_{i\in B} \omega_i \cdot \ell(f_W(x^{(t)}_i), \tilde{y}^{(t)}_i).
\end{equation*}
Figure \ref{fig:label_removal} compares the average weight of data points with the correct label to data points with an incorrect label.
We find that, with our reweighting scheme, the average weight of incorrectly labeled data points is significantly lower than that of correctly labeled data points.
Thus, the reweighting scheme expands the gradient of correctly labeled data points, thus reducing the fraction of noisy data points in the training set.

The pseudo-code for our final approach, which combines both layer-wise regularization and self-labeling, is presented in Algorithm \ref{alg:rnc}.

\begin{algorithm}[!t]
	\caption{Regularized self-labeling ({\sc RegSL})}\label{alg:rnc}
	\begin{small}
		\textbf{Input}: Input nosiy data $(x^{(t)}_1, \tilde{y}^{(t)}_1), \dots, (x^{(t)}_{n^{(t)}}, \tilde{y}^{(t)}_{n^{(t)}})$, network model $f_W$, constraint distance $D$, distance scale factor $\gamma$,
		re-weight start step $E_r$, re-weight temperature $\tau$, label correction start step $E_c$, label correction threhold $p_t$, and fine-tuning steps $T$
		\begin{algorithmic}[1] %
			\STATE Initialize model parameters with pre-trained weights $W = \hat{W}^{s}$ and the start step $t = 0$
			\WHILE{$t < T$}
			\STATE Fetch a mini-batch of data $ \{ (x^{(t)}_i, \tilde{y}^{(t)}_i)  \}_{i=1}^{m}$
			\FOR{each data point $i$ in the batch}
			\STATE Calculate the predicted probability $\mathbf{p}_i = \text{softmax}(f_W(x^{(t)}_i))$
			\IF{$t>E_c$ \textbf{and} $ \max(\mathbf{p}_i) > p_t$ \textbf{and} $\arg \max(\mathbf{p}_i) \neq \tilde{y}^{(t)}_i $}
			\STATE Correct the label $\tilde{y}^{(t)}_i = \arg \max(\mathbf{p}_i)$ \label{line_cor}
			\ENDIF
			\STATE Calculate loss $\ell_i = \ell(f_W(x^{(t)}_i), \tilde{y}^{(t)}_i)$
			\STATE \textbf{if} $t > E_r$ \textbf{then} Calculate the weight $\omega_i = \exp(-\ell_i/\tau)$ \textbf{else} Set the weight $\omega_i = 1$ \label{eq_rew}
			\ENDFOR
			\STATE Update $f_W$ by SGD on $\cL = - \frac{1}{\sum_i \omega_i} \sum_i^m \omega_i \ell_i$ and $t = t+1$
			\STATE Project the weights $W_i$ of each layer inside the layer-wise constraint region as in Equation \ref{eq_constraint_lw}
			\ENDWHILE
		\end{algorithmic}
	\end{small}
\end{algorithm}
	\section{Experiments}\label{sec_exp}

We evaluate our proposed algorithm in a wide range of tasks and pre-trained networks. 
First, we demonstrate that our algorithm outperforms the baselines by $1.75\%$ on average across seven transfer learning tasks and can generalize to a more distant task involving medical images. 
Second, our algorithm improves the robustness of fine-tuning in the presence of noisy labels compared to previous methods. Our algorithm improves over numerous baselines by $3.56\%$ on average under both independent and correlated label noise. Moreover, our approach can achieve a similar performance boost for fine-tuning vision transformer models on noisy labeled data. 
Finally, we demonstrate that our algorithms outperform fine-tuning baselines by $0.75\%$ on average for the related task of few-shot classification. 
Our extensive evaluation confirms that our approach is applicable to a broad range of settings, thereby validating our algorithmic insights.
Due to space constraints, we defer the ablation studies to Section \ref{sec:ablation_study}. Our code is available at \href{https://github.com/VirtuosoResearch/Regularized-Self-Labeling}{https://github.com/VirtuosoResearch/Regularized-Self-Labeling}.
 
\subsection{Experimental setup}\label{sec_exp_setup}

\noindent\textbf{Data sets.}
First, we evaluate fine-tuning on seven image classification datasets and one medical image dataset. The statistics of the seven image data sets are described in Table \ref{tab: dataset_statistics}. For the medical imaging task, we consider the ChestX-ray14 dataset, which contains 112,120 frontal-view chest X-ray images labeled with 14 different diseases \cite{wang2017chestx, rajpurkar2017chexnet}.
Second, we evaluate Algorithm \ref{alg:rnc} under two different kinds of label noise to test the robustness of fine-tuning. We selected the MIT-Indoor data set~\cite{sharif2014cnn} as the benchmark data set and randomly flipped the labels of the training samples. The results for the other datasets are similar, as detailed in Section \ref{sec:extended_exp}.
We consider both independent random noise and correlated noise in our experiments.
We generate the independent random noise by flipping the labels uniformly with a given noise rate. We simulate the correlated noise by using the predictions of an auxiliary network as noisy labels. Section \ref{sec:exp_details} describes the label flipping process.
For few-shot image classification, we conduct experiments on the miniImageNet benchmark~\cite{vinyals2016matching} following the setup of~\citet{tian2020rethinking}.

\noindent\textbf{Architectures.} We use the ResNet-101~\cite{he2016deep} network for transfer learning tasks and ResNet-18~\cite{he2016deep} network for ChestX-ray data set. We use the ResNet-18 network for the label-noise experiments and extend our algorithm to the vision transformer~(ViT) model~\cite{dosovitskiy2020image}.
The ResNet models are pre-trained on ImageNet~\cite{russakovsky2015imagenet}, and the ViT model is pre-trained on the ImageNet-21k dataset~\cite{dosovitskiy2020image}.
For the few-shot learning tasks, we use ResNet-12, pre-trained on the meta-training dataset, as in previous work~\cite{tian2020rethinking}.
We set four different values of $D_i$ for the four blocks of the ResNet models in our algorithm. We describe the fine-tuning procedure and the hyperparameters in Section \ref{sec:exp_details}.

\noindent\textbf{Baselines.}
For the transfer learning tasks, we use the Frobenius norm for distance regularization. We present ablation studies to compare the Frobenius norm and other norms in Section \ref{sec:ablation_study}.
we compare our algorithm with fine-tuning with early stop~(Fine-tuning), fine-tuning with weight decay~($\ell^2$-Norm), label smoothing~(LS), $\ell^2$-SP~\cite{li2018explicit}, and $\ell^2$-PGM~\cite{gouk2020distance}.
For testing the robustness of our algorithm, in addition to the baselines described above, we adopt several baselines that have been proposed in the supervised learning setting, including GCE~\cite{zhang2018generalized}, SCE~\cite{wang2019symmetric}, DAC~\cite{thulasidasan2019combating}, APL~\cite{ma2020normalized}, ELR~\cite{liu2020early} and self-adaptive training (SAT)~\cite{huang2020self}.
We compare our results with the same baselines to evaluate transfer learning in few-shot classification tasks. We describe the implementation and hyperparameters of baselines in Section \ref{sec:exp_details}.

\subsection{Experimental results}

\noindent\textbf{Improved regularization for transfer learning.}
We report the test accuracy of fine-tuning  ResNet-101 on seven data sets in Table \ref{tab:finetuning_res}. We observe that our method performs the best among six regularized fine-tuning methods on average. Our proposed layer-wise regularization method achieves an average improvement of $\mathbf{1.76\%}$ over distance-based regularization~\cite{gouk2020distance}. In particular, our approach outperforms $\ell^2$-PGM by $\mathbf{2 \sim 3\%}$ on both Stanford-Dogs and MIT-Indoor data sets. This suggests that adding appropriate distance constraints for all the layers is better than applying only one constraint to the top layers.
In Table \ref{tab:compare_norm} (cf. Section \ref{sec:ablation_study}), we note that using the MARS norm of \citet{gouk2020distance} yields comparable results to using the $\ell_2$ norm with our approach.

Next, we apply our approach to medical image classification,  a more distant transfer task (relative to the source dataset ImageNet).
We report the mean AUROC (averaged over predicting all 14 labels) on the ChestX-ray14 data set in Table~\ref{tab:chexnet_results} (cf. Section \ref{sec:extended_exp}). With layer-wise regularization, we see an $\bm{0.39\%}$ improvement over the baseline methods.
Finally, we extend our approach to text classification. The results are consistent with the above and are presented in Table \ref{tab:text_res} (see Section \ref{sec:extended_exp}).

\begin{table*}[!t]
\centering
\caption{Top-1 test accuracy for fine-tuning  ResNet-101 pre-trained on the ILSVRC-2012 subset of ImageNet. Results are averaged over three random seeds.}\label{tab:finetuning_res}
\begin{scriptsize}
\begin{tabular}{lccccccc}
\toprule
              & Aircrafts & CUB-200-2011 & Caltech-256 & Stanford-Cars & Stanford-Dogs & Flowers & MIT-Indoor \\ \midrule
Fine-tuning   & 73.96$\pm$0.34 & 80.31$\pm$0.26 & 79.41$\pm$0.23 & 89.28$\pm$0.14          & 82.89$\pm$0.35 & 92.92$\pm$0.15 &  74.78$\pm$0.97 \\
$\ell^2$-Norm & 74.84$\pm$0.15 & 80.80$\pm$0.16 & 79.67$\pm$0.31 & 88.96$\pm$0.18          & 83.00$\pm$0.10 & 92.76$\pm$0.26 &  76.57$\pm$0.70 \\
LS            & 74.19$\pm$0.41 & 82.22$\pm$0.17 & 81.10$\pm$0.13 & \textbf{89.66$\pm$0.10} & 84.14$\pm$0.21 & 93.72$\pm$0.04 &  76.84$\pm$0.31 \\
$\ell^2$-SP   & 74.09$\pm$0.98 & 81.49$\pm$0.36 & 84.13$\pm$0.09 & 88.96$\pm$0.09          & 88.95$\pm$0.15 & 93.06$\pm$0.28 &  78.11$\pm$0.37 \\
$\ell^2$-PGM  & 74.90$\pm$0.26 & 81.23$\pm$0.32 & 83.25$\pm$0.33 & 88.92$\pm$0.39          & 86.48$\pm$0.28 & 93.23$\pm$0.34 &  77.31$\pm$0.30 \\ \midrule
{\sc RegSL} (ours)          & \textbf{75.32$\pm$0.23} & \textbf{82.24$\pm$0.21} & \textbf{84.90$\pm$0.16} & 89.14$\pm$0.22 & \textbf{89.58$\pm$0.13} & \textbf{93.82$\pm$0.35} & \textbf{79.30$\pm$0.31} \\ \bottomrule
\end{tabular}
\end{scriptsize}
\end{table*}

\begin{table*}[!t]
\centering
\caption{Top-1 test accuracy of fine-tuning ResNet-18 on the Indoor data set with various settings of noisy labels in the training set. Results are averaged over 3 random seeds.}\label{tab:noise_res}
\begin{small}
\begin{tabular}{@{}ccccccc@{}}
\toprule
\multirow{2}{*}{Data sets}    & \multirow{2}{*}{Methods} & \multicolumn{4}{c}{independent noise} & correlated noise \\
                             &                          & 20\%    & 40\%    & 60\%    & 80\%    & 25.18\%          \\ \midrule
\multirow{10}{*}{Indoor} & Fine-tuning  & 65.02$\pm$0.39 & 57.49$\pm$0.39 & 44.60$\pm$0.95 & 27.09$\pm$0.19 & 67.49$\pm$0.74 \\
                             & LS           & 67.04$\pm$0.58 & 58.98$\pm$0.57 & 48.56$\pm$0.53 & 25.82$\pm$0.90 & 68.06$\pm$0.76 \\
                             & $\ell^2$-PGM & 69.45$\pm$0.19 & 62.74$\pm$0.60 & 51.24$\pm$0.15 & 30.15$\pm$0.94 & 68.86$\pm$0.27 \\
                             & GCE          & 70.45$\pm$0.40 & 64.73$\pm$0.21 & 54.10$\pm$0.16 & 29.58$\pm$0.26 & 68.88$\pm$0.18 \\
                             & SCE          & 69.43$\pm$0.20 & 64.68$\pm$0.45 & 55.07$\pm$0.52 & 29.85$\pm$0.44 & 69.00$\pm$1.22 \\
                             & DAC          & 64.45$\pm$0.31 & 59.73$\pm$0.27 & 47.44$\pm$0.09 & 26.69$\pm$0.34 & 68.06$\pm$1.32 \\
                             & APL          & 70.05$\pm$0.41 & 66.22$\pm$0.10 & 52.51$\pm$0.66 & 30.90$\pm$0.37 & 68.31$\pm$0.77 \\
                             & SAT          & 68.98$\pm$0.63 & 63.43$\pm$0.72 & 52.84$\pm$0.38 & 29.60$\pm$0.53 & 67.06$\pm$0.55 \\
                             & ELR          & 71.43$\pm$0.80 & 66.34$\pm$0.48 & 55.22$\pm$0.73 & 31.24$\pm$0.19 & 69.38$\pm$0.49 \\ \cmidrule(l){2-7} 
                             & {\sc RegSL} (ours)         & \textbf{72.51$\pm$0.46} & \textbf{68.13$\pm$0.16} & \textbf{57.59$\pm$0.55} & \textbf{34.08$\pm$0.79} & \textbf{70.12$\pm$0.83} \\ \bottomrule
\end{tabular}
\end{small}
\vspace{-0.1in}
\end{table*}

\noindent\textbf{Improved robustness under label noise.}
Next, we report the test accuracy of our approach on the indoor dataset with independent and correlated noise in Table \ref{tab:noise_res}. 
We find that our approach consistently outperforms baseline methods by $\mathbf{1\sim3\%}$ for various settings involving label noise.
First, our method ({\sc RegSL}) improves the performance by over $\mathbf{4\%}$ on average compared to distance-based regularization~\cite{gouk2020distance}. This result implies our method is more robust to label noise in the training labels.
Second, our method outperforms previous supervised training methods by an average of $\textbf{3.56\%}$.
This result suggests that regularization is critical for fine-tuning.
Taken together, our results suggest that regularization and self-labeling complement each other during the fine-tuning process.
This is reinforced by our ablation study in Section \ref{sec:ablation_study}, where we investigate the influence of each component of our algorithm and find that removing any component degrades performance.

We further apply our approach to fine-tuning pre-trained ViT~\cite{dosovitskiy2020image} from noisy labels, under the same settings as those in Table~\ref{tab:noise_res}. 
Table~\ref{tab:vit_results} shows the result (cf. Section \ref{sec:extended_exp}).
First, we find that our approach outperforms the best regularization methods by $\bm{1.17\%}$, averaged over two settings.
Second, we find that our approach also improves upon self-labeling by $\bm{13.57\%}$ averaged over two settings.
These two results again highlight that both regularization and self-labeling contribute to the final result.
While regularization prevents the model from overfitting to the random labels, self-labeling injects the belief of the fine-tuned model into the noisy dataset.

\begin{wraptable}[13]{r}{0.45\textwidth}
\centering
\vspace{-0.17in}
\caption{Test accuracy with 95\% confidence interval for fine-tuning ResNet-12 over 600 meta-test splits of miniImageNet.}\label{tab:few_shot}
\vspace{-0.06in}
\begin{small}
\begin{tabular}{@{}lcc@{}}
    \toprule
    \multirow{2}{*}{Methods} & \multicolumn{2}{c}{miniImageNet} \\ \cmidrule(l){2-3} 
                             & 5-way-1-shot     & 5-way-5-shot   \\ \midrule
    Fine-tuning   & 60.56 $\pm$ 0.78 &  76.42 $\pm$ 0.36              \\
    $\ell^2$-Norm & 60.93 $\pm$ 0.81 &  76.57 $\pm$ 0.55              \\
    LS            & 61.31 $\pm$ 0.77 &  76.73 $\pm$ 0.62              \\
    $\ell^2$-SP   & 61.48 $\pm$ 0.76 &  77.02 $\pm$ 0.62              \\
    $\ell^2$-PGM  & 61.35 $\pm$ 0.83 &  77.33 $\pm$ 0.59              \\ \midrule
    {\sc RegSL} (ours)          & \textbf{61.71 $\pm$ 0.77} &  \textbf{78.03 $\pm$ 0.54}              \\ \bottomrule
    \end{tabular}
\end{small}
\end{wraptable}
\noindent\textbf{Extension to few-shot classification.}
We extend our approach to a few-shot image classification task.
We compare our approach to the baseline regularization methods.
In Table \ref{tab:few_shot}, we report the average accuracy of 600 sampled tasks from the meta-test split of the miniImageNet benchmark~\cite{vinyals2016matching}.
We find that our layer-wise regularization method achieves an average improvement of $\mathbf{0.75\%}$ compared to previous regularization methods.
Additionally, regularization methods generally improve the performance of fine-tuning by over $\mathbf{1\%}$.

\textbf{Ablation studies.} We study the influence of removing each component from our algorithm. Results shown in Table \ref{tab:noise_ablation} (cf. Section \ref{sec:ablation_study}) suggest that they all degrade performance. Thus, all three components in Algorithm \ref{alg:rnc} contribute to the final performance. The importance of each component depends on the noise rate. 
In particular, if the noise rate exceeds $40\%$, label correction is the most critical component. 
Other ablation studies we present include comparing the $\ell_2$ norm and the MARS norm in our algorithm, as well as comparing the distance parameters between layer-wise and constant regularization.
We leave the details to Section \ref{sec:ablation_study}.

	\section{Conclusion}

This paper studied regularization methods for fine-tuning as well as their robustness properties.
We investigated the generalization error of fine-tuning using  PAC-Bayesian techniques.
This leads to two empirical measures, which we empirically computed.
The analysis inspired us to consider layer-wise regularization for fine-tuning.
This approach performs well on both pre-trained ResNets and ViTs; we have found that fine-tuning using vanilla fine-tuning yields the best results, and we have encoded the distance patterns in layer-wise regularization.
We then evaluated the performance of regularized fine-tuning from noisy labels.
We proposed a self-label-correction and label-reweighting approach in the noisy setting.
In a follow-up paper \cite{ju2022robust}, we improve upon our theoretical result in Theorem \ref{thm_fnn} and derive an improved Hessian-based generalization bound, which leads to non-vacuous bounds when measured in practice.

\paragraph{Acknowledgment.} Thanks to the anonymous reviewers and the area chairs for carefully reading through our work and providing constructive feedback.
Thanks to the program committee chairs for communicating with us regarding an omitted factor in equation \eqref{eq_concentra}, which we have added in the current version.
H. Z. would like to acknowledge Sen Wu and Christopher R\'e for several discussions related to this work.
This work has been partially supported by Northeastern University's Khoury seed/proof-of-concept grant.

	\bibliographystyle{plainnat}
	\bibliography{rf,rf_tl,rf_ldy}

\begin{thebibliography}{51}
\providecommand{\natexlab}[1]{#1}
\providecommand{\url}[1]{\texttt{#1}}
\expandafter\ifx\csname urlstyle\endcsname\relax
  \providecommand{\doi}[1]{doi: #1}\else
  \providecommand{\doi}{doi: \begingroup \urlstyle{rm}\Url}\fi

\bibitem[Arora et~al.(2019)Arora, Du, Hu, Li, and Wang]{arora2019fine}
Sanjeev Arora, Simon Du, Wei Hu, Zhiyuan Li, and Ruosong Wang.
\newblock Fine-grained analysis of optimization and generalization for
  overparameterized two-layer neural networks.
\newblock In \emph{International Conference on Machine Learning}. PMLR, 2019.

\bibitem[Bartlett et~al.(2017)Bartlett, Foster, and
  Telgarsky]{bartlett2017spectrally}
Peter Bartlett, Dylan~J Foster, and Matus Telgarsky.
\newblock Spectrally-normalized margin bounds for neural networks.
\newblock \emph{NeurIPS}, 2017.

\bibitem[Bartlett et~al.(2019)Bartlett, Harvey, Liaw, and
  Mehrabian]{bartlett2019nearly}
Peter~L Bartlett, Nick Harvey, Christopher Liaw, and Abbas Mehrabian.
\newblock Nearly-tight vc-dimension and pseudodimension bounds for piecewise
  linear neural networks.
\newblock \emph{The Journal of Machine Learning Research}, 20\penalty0
  (1):\penalty0 2285--2301, 2019.

\bibitem[Ben-David et~al.(2010)Ben-David, Blitzer, Crammer, Kulesza, Pereira,
  and Vaughan]{ben2010theory}
Shai Ben-David, John Blitzer, Koby Crammer, Alex Kulesza, Fernando Pereira, and
  Jennifer~Wortman Vaughan.
\newblock A theory of learning from different domains.
\newblock \emph{Machine learning}, 2010.

\bibitem[Chen et~al.(2019)Chen, Wang, Fu, Long, and Wang]{chen2019catastrophic}
Xinyang Chen, Sinan Wang, Bo~Fu, Mingsheng Long, and Jianmin Wang.
\newblock Catastrophic forgetting meets negative transfer: Batch spectral
  shrinkage for safe transfer learning.
\newblock 2019.

\bibitem[Dhillon et~al.(2019)Dhillon, Chaudhari, Ravichandran, and
  Soatto]{dhillon2019baseline}
Guneet~S Dhillon, Pratik Chaudhari, Avinash Ravichandran, and Stefano Soatto.
\newblock A baseline for few-shot image classification.
\newblock \emph{arXiv preprint arXiv:1909.02729}, 2019.

\bibitem[Dosovitskiy et~al.(2020)Dosovitskiy, Beyer, Kolesnikov, Weissenborn,
  Zhai, Unterthiner, Dehghani, Minderer, Heigold, Gelly,
  et~al.]{dosovitskiy2020image}
Alexey Dosovitskiy, Lucas Beyer, Alexander Kolesnikov, Dirk Weissenborn,
  Xiaohua Zhai, Thomas Unterthiner, Mostafa Dehghani, Matthias Minderer, Georg
  Heigold, Sylvain Gelly, et~al.
\newblock An image is worth 16x16 words: Transformers for image recognition at
  scale.
\newblock \emph{arXiv preprint arXiv:2010.11929}, 2020.

\bibitem[Dziugaite and Roy(2017)]{dziugaite2017computing}
Gintare~Karolina Dziugaite and Daniel~M Roy.
\newblock Computing nonvacuous generalization bounds for deep (stochastic)
  neural networks with many more parameters than training data.
\newblock \emph{arXiv preprint arXiv:1703.11008}, 2017.

\bibitem[Gouk et~al.(2021)Gouk, Hospedales, and Pontil]{gouk2020distance}
Henry Gouk, Timothy~M Hospedales, and Massimiliano Pontil.
\newblock Distance-based regularisation of deep networks for fine-tuning.
\newblock \emph{ICLR}, 2021.

\bibitem[Griffin et~al.(2007)Griffin, Holub, and Perona]{griffin2007caltech}
Gregory Griffin, Alex Holub, and Pietro Perona.
\newblock Caltech-256 object category dataset.
\newblock 2007.

\bibitem[Guo et~al.(2019)Guo, Shi, Kumar, Grauman, Rosing, and
  Feris]{guo2019spottune}
Yunhui Guo, Honghui Shi, Abhishek Kumar, Kristen Grauman, Tajana Rosing, and
  Rogerio Feris.
\newblock Spottune: transfer learning through adaptive fine-tuning.
\newblock In \emph{Proceedings of the IEEE/CVF Conference on Computer Vision
  and Pattern Recognition}, 2019.

\bibitem[He et~al.(2016)He, Zhang, Ren, and Sun]{he2016deep}
Kaiming He, Xiangyu Zhang, Shaoqing Ren, and Jian Sun.
\newblock Deep residual learning for image recognition.
\newblock In \emph{Conference on computer vision and pattern recognition},
  pages 770--778, 2016.

\bibitem[Huang et~al.(2020)Huang, Zhang, and Zhang]{huang2020self}
Lang Huang, Chao Zhang, and Hongyang Zhang.
\newblock Self-adaptive training: beyond empirical risk minimization.
\newblock \emph{Advances in Neural Information Processing Systems}, 2020.

\bibitem[Jiang et~al.(2020)Jiang, Neyshabur, Mobahi, Krishnan, and
  Bengio]{jiang2019fantastic}
Yiding Jiang, Behnam Neyshabur, Hossein Mobahi, Dilip Krishnan, and Samy
  Bengio.
\newblock Fantastic generalization measures and where to find them.
\newblock \emph{ICLR}, 2020.

\bibitem[Ju et~al.(2022)Ju, Li, and Zhang]{ju2022robust}
Haotian Ju, Dongyue Li, and Hongyang~R Zhang.
\newblock Robust fine-tuning of deep neural networks with hessian-based
  generalization guarantees.
\newblock In \emph{International conference on machine learning}, pages
  10431--10461. PMLR, 2022.

\bibitem[Khosla et~al.(2011)Khosla, Jayadevaprakash, Yao, and
  Fei-Fei]{stanford_dogs2011}
Aditya Khosla, Nityananda Jayadevaprakash, Bangpeng Yao, and Li~Fei-Fei.
\newblock Novel dataset for fine-grained image categorization.
\newblock In \emph{Workshop on Fine-Grained Visual Categorization, IEEE
  Conference on Computer Vision and Pattern Recognition}, 2011.

\bibitem[Krause et~al.(2013)Krause, Stark, Deng, and
  Fei-Fei]{stanford_cars2013}
Jonathan Krause, Michael Stark, Jia Deng, and Li~Fei-Fei.
\newblock 3d object representations for fine-grained categorization.
\newblock In \emph{Workshop on 3D Representation and Recognition}, 2013.

\bibitem[Li et~al.(2020)Li, Chaudhari, Yang, Lam, Ravichandran, Bhotika, and
  Soatto]{li2020rethinking}
Hao Li, Pratik Chaudhari, Hao Yang, Michael Lam, Avinash Ravichandran, Rahul
  Bhotika, and Stefano Soatto.
\newblock Rethinking the hyperparameters for fine-tuning.
\newblock \emph{arXiv preprint arXiv:2002.11770}, 2020.

\bibitem[Li et~al.(2018{\natexlab{a}})Li, Xiong, Wang, Rao, Liu, and
  Huan]{li2018delta}
Xingjian Li, Haoyi Xiong, Hanchao Wang, Yuxuan Rao, Liping Liu, and Jun Huan.
\newblock Delta: Deep learning transfer using feature map with attention for
  convolutional networks.
\newblock In \emph{International Conference on Learning Representations},
  2018{\natexlab{a}}.

\bibitem[Li et~al.(2018{\natexlab{b}})Li, Grandvalet, and
  Davoine]{li2018explicit}
Xuhong Li, Yves Grandvalet, and Franck Davoine.
\newblock Explicit inductive bias for transfer learning with convolutional
  networks.
\newblock \emph{arXiv preprint arXiv:1802.01483}, 2018{\natexlab{b}}.

\bibitem[Li et~al.(2018{\natexlab{c}})Li, Ma, and Zhang]{li2018algorithmic}
Yuanzhi Li, Tengyu Ma, and Hongyang Zhang.
\newblock Algorithmic regularization in over-parameterized matrix sensing and
  neural networks with quadratic activations.
\newblock In \emph{Conference On Learning Theory}, pages 2--47. PMLR,
  2018{\natexlab{c}}.

\bibitem[Liu et~al.(2020)Liu, Niles-Weed, Razavian, and
  Fernandez-Granda]{liu2020early}
Sheng Liu, Jonathan Niles-Weed, Narges Razavian, and Carlos Fernandez-Granda.
\newblock Early-learning regularization prevents memorization of noisy labels.
\newblock \emph{arXiv preprint arXiv:2007.00151}, 2020.

\bibitem[Long and Sedghi(2020)]{long2020generalization}
Philip~M Long and Hanie Sedghi.
\newblock Generalization bounds for deep convolutional neural networks.
\newblock \emph{ICLR}, 2020.

\bibitem[Ma et~al.(2020)Ma, Huang, Wang, Romano, Erfani, and
  Bailey]{ma2020normalized}
Xingjun Ma, Hanxun Huang, Yisen Wang, Simone Romano, Sarah Erfani, and James
  Bailey.
\newblock Normalized loss functions for deep learning with noisy labels.
\newblock In \emph{International Conference on Machine Learning}, 2020.

\bibitem[Maji et~al.(2013)Maji, Rahtu, Kannala, Blaschko, and
  Vedaldi]{maji2013fine}
Subhransu Maji, Esa Rahtu, Juho Kannala, Matthew Blaschko, and Andrea Vedaldi.
\newblock Fine-grained visual classification of aircraft.
\newblock \emph{arXiv preprint arXiv:1306.5151}, 2013.

\bibitem[McAllester(1999{\natexlab{a}})]{mcallester1999pac}
David~A McAllester.
\newblock Pac-bayesian model averaging.
\newblock In \emph{Proceedings of the twelfth annual conference on
  Computational learning theory}, pages 164--170, 1999{\natexlab{a}}.

\bibitem[McAllester(1999{\natexlab{b}})]{mcallester1999some}
David~A McAllester.
\newblock Some pac-bayesian theorems.
\newblock \emph{Machine Learning}, 37\penalty0 (3):\penalty0 355--363,
  1999{\natexlab{b}}.

\bibitem[Nagarajan and Kolter(2018)]{nagarajan2018deterministic}
Vaishnavh Nagarajan and Zico Kolter.
\newblock Deterministic pac-bayesian generalization bounds for deep networks
  via generalizing noise-resilience.
\newblock In \emph{International Conference on Learning Representations}, 2018.

\bibitem[Natarajan et~al.(2013)Natarajan, Dhillon, Ravikumar, and
  Tewari]{natarajan2013learning}
Nagarajan Natarajan, Inderjit~S Dhillon, Pradeep Ravikumar, and Ambuj Tewari.
\newblock Learning with noisy labels.
\newblock In \emph{Conference on Neural Information Processing Systems}, 2013.

\bibitem[Neyshabur et~al.(2018)Neyshabur, Bhojanapalli, and
  Srebro]{neyshabur2017pac}
Behnam Neyshabur, Srinadh Bhojanapalli, and Nathan Srebro.
\newblock A pac-bayesian approach to spectrally-normalized margin bounds for
  neural networks.
\newblock \emph{ICLR}, 2018.

\bibitem[Nilsback and Zisserman(2008)]{nilsback2008automated}
Maria-Elena Nilsback and Andrew Zisserman.
\newblock Automated flower classification over a large number of classes.
\newblock In \emph{Indian Conference on Computer Vision, Graphics \& Image
  Processing}, 2008.

\bibitem[Rajpurkar et~al.(2017)Rajpurkar, Irvin, Zhu, Yang, Mehta, Duan, Ding,
  Bagul, Langlotz, Shpanskaya, et~al.]{rajpurkar2017chexnet}
Pranav Rajpurkar, Jeremy Irvin, Kaylie Zhu, Brandon Yang, Hershel Mehta, Tony
  Duan, Daisy Ding, Aarti Bagul, Curtis Langlotz, Katie Shpanskaya, et~al.
\newblock Chexnet: Radiologist-level pneumonia detection on chest x-rays with
  deep learning.
\newblock \emph{arXiv preprint arXiv:1711.05225}, 2017.

\bibitem[Ratner et~al.(2016)Ratner, De~Sa, Wu, Selsam, and
  R{\'e}]{ratner2016data}
Alexander Ratner, Christopher De~Sa, Sen Wu, Daniel Selsam, and Christopher
  R{\'e}.
\newblock Data programming: Creating large training sets, quickly.
\newblock \emph{Advances in neural information processing systems}, 2016.

\bibitem[Ratner et~al.(2017)Ratner, Bach, Ehrenberg, Fries, Wu, and
  R{\'e}]{ratner2017snorkel}
Alexander Ratner, Stephen~H Bach, Henry Ehrenberg, Jason Fries, Sen Wu, and
  Christopher R{\'e}.
\newblock Snorkel: Rapid training data creation with weak supervision.
\newblock In \emph{International Conference on Very Large Data Bases}. NIH
  Public Access, 2017.

\bibitem[Ratner et~al.(2019)Ratner, Hancock, Dunnmon, Sala, Pandey, and
  R{\'e}]{ratner2019training}
Alexander Ratner, Braden Hancock, Jared Dunnmon, Frederic Sala, Shreyash
  Pandey, and Christopher R{\'e}.
\newblock Training complex models with multi-task weak supervision.
\newblock In \emph{Proceedings of the AAAI Conference on Artificial
  Intelligence}, volume~33, pages 4763--4771, 2019.

\bibitem[Russakovsky et~al.(2015)Russakovsky, Deng, Su, Krause, Satheesh, Ma,
  Huang, Karpathy, Khosla, Bernstein, et~al.]{russakovsky2015imagenet}
Olga Russakovsky, Jia Deng, Hao Su, Jonathan Krause, Sanjeev Satheesh, Sean Ma,
  Zhiheng Huang, Andrej Karpathy, Aditya Khosla, Michael Bernstein, et~al.
\newblock Imagenet large scale visual recognition challenge.
\newblock \emph{International journal of computer vision}, 2015.

\bibitem[Saab et~al.(2021)Saab, Hooper, Sohoni, Parmar, Pogatchnik, Wu,
  Dunnmon, Zhang, Rubin, and R{\'e}]{saab2021observational}
Khaled Saab, Sarah~M Hooper, Nimit~S Sohoni, Jupinder Parmar, Brian Pogatchnik,
  Sen Wu, Jared~A Dunnmon, Hongyang~R Zhang, Daniel Rubin, and Christopher
  R{\'e}.
\newblock Observational supervision for medical image classification using gaze
  data.
\newblock In \emph{International Conference on Medical Image Computing and
  Computer-Assisted Intervention}, pages 603--614. Springer, 2021.

\bibitem[Salman et~al.(2020)Salman, Ilyas, Engstrom, Kapoor, and
  Madry]{salman2020adversarially}
Hadi Salman, Andrew Ilyas, Logan Engstrom, Ashish Kapoor, and Aleksander Madry.
\newblock Do adversarially robust imagenet models transfer better?
\newblock \emph{arXiv preprint arXiv:2007.08489}, 2020.

\bibitem[Sharif~Razavian et~al.(2014)Sharif~Razavian, Azizpour, Sullivan, and
  Carlsson]{sharif2014cnn}
Ali Sharif~Razavian, Hossein Azizpour, Josephine Sullivan, and Stefan Carlsson.
\newblock Cnn features off-the-shelf: an astounding baseline for recognition.
\newblock In \emph{Conference on computer vision and pattern recognition
  workshops}, 2014.

\bibitem[Thulasidasan et~al.(2019)Thulasidasan, Bhattacharya, Bilmes,
  Chennupati, and Mohd-Yusof]{thulasidasan2019combating}
Sunil Thulasidasan, Tanmoy Bhattacharya, Jeff Bilmes, Gopinath Chennupati, and
  Jamal Mohd-Yusof.
\newblock Combating label noise in deep learning using abstention.
\newblock \emph{arXiv preprint arXiv:1905.10964}, 2019.

\bibitem[Tian et~al.(2020)Tian, Wang, Krishnan, Tenenbaum, and
  Isola]{tian2020rethinking}
Yonglong Tian, Yue Wang, Dilip Krishnan, Joshua~B Tenenbaum, and Phillip Isola.
\newblock Rethinking few-shot image classification: a good embedding is all you
  need?
\newblock \emph{arXiv preprint arXiv:2003.11539}, 2020.

\bibitem[Tropp(2015)]{tropp2015introduction}
Joel~A Tropp.
\newblock An introduction to matrix concentration inequalities.
\newblock \emph{arXiv preprint arXiv:1501.01571}, 2015.

\bibitem[Vinyals et~al.(2016)Vinyals, Blundell, Lillicrap, Kavukcuoglu, and
  Wierstra]{vinyals2016matching}
Oriol Vinyals, Charles Blundell, Timothy Lillicrap, Koray Kavukcuoglu, and Daan
  Wierstra.
\newblock Matching networks for one shot learning.
\newblock \emph{arXiv preprint arXiv:1606.04080}, 2016.

\bibitem[Wah et~al.(2011)Wah, Branson, Welinder, Perona, and
  Belongie]{WahCUB_200_2011}
C.~Wah, S.~Branson, P.~Welinder, P.~Perona, and S.~Belongie.
\newblock {The Caltech-UCSD Birds-200-2011 Dataset}.
\newblock Technical Report CNS-TR-2011-001, California Institute of Technology,
  2011.

\bibitem[Wang et~al.(2017)Wang, Peng, Lu, Lu, Bagheri, and
  Summers]{wang2017chestx}
Xiaosong Wang, Yifan Peng, Le~Lu, Zhiyong Lu, Mohammadhadi Bagheri, and
  Ronald~M Summers.
\newblock Chestx-ray8: Hospital-scale chest x-ray database and benchmarks on
  weakly-supervised classification and localization of common thorax diseases.
\newblock In \emph{Conference on computer vision and pattern recognition},
  2017.

\bibitem[Wang et~al.(2019)Wang, Ma, Chen, Luo, Yi, and
  Bailey]{wang2019symmetric}
Yisen Wang, Xingjun Ma, Zaiyi Chen, Yuan Luo, Jinfeng Yi, and James Bailey.
\newblock Symmetric cross entropy for robust learning with noisy labels.
\newblock In \emph{International Conference on Computer Vision}, pages
  322--330, 2019.

\bibitem[Wu et~al.(2020)Wu, Zhang, and R{\'e}]{WZR20}
Sen Wu, Hongyang~R Zhang, and Christopher R{\'e}.
\newblock Understanding and improving information transfer in multi-task
  learning.
\newblock \emph{ICLR}, 2020.

\bibitem[Yang et~al.(2025)Yang, Zhang, Wu, Su, and R{\'e}]{yang2020analysis}
Fan Yang, Hongyang~R Zhang, Sen Wu, Weijie~J Su, and Christopher R{\'e}.
\newblock Precise high-dimensional asymptotics for quantifying heterogeneous
  transfers.
\newblock \emph{Journal of Machine Learning Research}, 2025.

\bibitem[Zhang et~al.(2016)Zhang, Bengio, Hardt, Recht, and
  Vinyals]{zhang2016understanding}
Chiyuan Zhang, Samy Bengio, Moritz Hardt, Benjamin Recht, and Oriol Vinyals.
\newblock Understanding deep learning requires rethinking generalization.
\newblock \emph{arXiv preprint arXiv:1611.03530}, 2016.

\bibitem[Zhang and Sabuncu(2018)]{zhang2018generalized}
Zhilu Zhang and Mert~R Sabuncu.
\newblock Generalized cross entropy loss for training deep neural networks with
  noisy labels.
\newblock \emph{arXiv preprint arXiv:1805.07836}, 2018.

\bibitem[Zhou et~al.(2018)Zhou, Veitch, Austern, Adams, and
  Orbanz]{zhou2018non}
Wenda Zhou, Victor Veitch, Morgane Austern, Ryan~P Adams, and Peter Orbanz.
\newblock Non-vacuous generalization bounds at the imagenet scale: a
  pac-bayesian compression approach.
\newblock \emph{arXiv preprint arXiv:1804.05862}, 2018.

\end{thebibliography}
    \newpage
	\appendix
	\section{Proof of Theorem \ref{thm_fnn}}\label{sec:proof_details}

\textbf{Background.}
PAC-Bayesian generalization theory provides a useful approach to incorporating data-dependent properties, such as noise stability and sharpness, into generalization bounds.
Recent works (e.g., \citet{bartlett2017spectrally,neyshabur2017pac}) have introduced generalization bounds for multi-layer neural networks, aiming to explain why neural network models generalize well despite having more trainable parameters than the number of training examples. On the one hand, the VC dimension of a neural network is known to be roughly equal to its number of parameters \cite{bartlett2019nearly}. On the other hand, the number of parameters is not a good capacity measure for neural nets, as evidenced by the popular work of \citet{zhang2016understanding}. The bounds of \citet{bartlett2017spectrally} (and subsequent works) provide a more meaningful notion of “capacity” compared to VC dimension.

While these bounds constitute an improvement over classical learning theory, it remains unclear whether they are tight or non-vacuous. One response is a computational framework from \citet{dziugaite2017computing}. That work demonstrates that directly optimizing the PAC-Bayes bound yields a significantly smaller bound and lower test error simultaneously (see also \citet{zhou2018non} for a large-scale study). The recent work of \citet{jiang2019fantastic} further compared different complexity notions and noted that the ones given by PAC-Bayes tools correlate better with empirical performance.
In particular, we will use the following classical result \cite{mcallester1999pac,mcallester1999some}.

\begin{theorem}[Theorem 1 in \citet{mcallester1999pac}]\label{thm_pac_bayes}
	Let $\cH$ be some hypothesis class. Let $P$ be a prior distribution on $\cH$ that is independent of the training set.
	Let $Q_S$ be a posterior distribution on $\cH$ that may depend on the training set $S$.
	Suppose the loss function is bounded from above by $C$.
	With probability $1 - \delta$ over the randomness of the training set, the following holds
	\begin{align}\label{thm_pac}
		\exarg{h \sim Q_S}{\cL(h)} \le {\exarg{h \sim Q_S}{\hat \cL(h)} + C\sqrt{\frac{{\KL(Q_S \mid\mid P) + 3\ln \frac{n}{\delta} + 8}}{n}}}.
	\end{align}
\end{theorem}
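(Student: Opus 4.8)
The statement is entirely generic: it concerns an arbitrary hypothesis class $\cH$, a data-independent prior $P$, an arbitrary (possibly data-dependent) posterior $Q_S$, and a loss bounded by $C$, with no reference to the network structure of Section~\ref{sec_reg}. My plan is therefore to prove it by the standard two-ingredient PAC-Bayes recipe, which is purely information-theoretic except for one appeal to boundedness. First I would use a \emph{change-of-measure} inequality to convert the expectation under the data-dependent $Q_S$ into an expectation under the fixed prior $P$, at the cost of a $\KL$ term. Then I would control that prior expectation through a \emph{single-hypothesis concentration} bound, valid uniformly over the draw of the training set $S$, and finally invert it into the advertised square-root form.

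\textbf{Change of measure and single-hypothesis concentration.} For any fixed test function $g\colon\cH\to\real$ (allowed to depend on $S$) and any distribution $Q$ on $\cH$, the Donsker--Varadhan variational formula for relative entropy gives
\[
\exarg{h\sim Q}{g(h)} \;\le\; \KL(Q \mid\mid P) + \ln \exarg{h\sim P}{e^{g(h)}}.
\]
I would apply this with $g(h) = \frac{n}{C^2}\,\Delta_S(h)^2$, where $\Delta_S(h) := \cL(h) - \hat\cL(h)$ is the generalization gap of $h$. To bound the inner expectation, fix any $h$ chosen independently of $S$: then $\hat\cL(h)$ is an average of $n$ i.i.d.\ terms in $[0,C]$ with mean $\cL(h)$, so $\Delta_S(h)$ is centered and, by Hoeffding's lemma, sub-Gaussian with variance proxy $C^2/(4n)$. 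Linearizing the square through the Gaussian identity $\exp(\lambda x^2) = \exarg{\zeta\sim\cN(0,1)}{\exp(\sqrt{2\lambda}\,\zeta x)}$ and swapping the two expectations yields, at $\lambda = n/C^2$,
\[
\exarg{S}{\exp\!\Big(\tfrac{n}{C^2}\Delta_S(h)^2\Big)} \;\le\; \Big(1 - \tfrac{1}{2}\Big)^{-1/2} = \sqrt{2}.
\]

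\textbf{Markov and inversion.} I would then define the scalar random variable $\xi_S := \exarg{h\sim P}{\exp(\tfrac{n}{C^2}\Delta_S(h)^2)}$. Because $P$ is independent of the training set, Fubini's theorem combined with the previous display gives $\exarg{S}{\xi_S} = \exarg{h\sim P}{\exarg{S}{\exp(\tfrac{n}{C^2}\Delta_S(h)^2)}} \le \sqrt2$, and Markov's inequality shows that with probability at least $1-\delta$ over $S$ we have $\ln \xi_S \le \frac{1}{2}\ln 2 + \ln(1/\delta)$. On this event, inserting $Q = Q_S$ into the change-of-measure bound and applying Jensen's inequality $\exarg{h\sim Q_S}{\Delta_S(h)^2}\ge \big(\exarg{h\sim Q_S}{\Delta_S(h)}\big)^2$ gives
\[
\frac{n}{C^2}\Big(\exarg{h\sim Q_S}{\cL(h)} - \exarg{h\sim Q_S}{\hat\cL(h)}\Big)^2 \;\le\; \KL(Q_S \mid\mid P) + \ln(1/\delta) + \frac{1}{2}\ln 2,
\]
and taking square roots produces a bound of exactly the advertised shape.

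\textbf{Main obstacle and the constants.} The crux is reconciling the \emph{for-all-$Q_S$} nature of the change-of-measure step (the posterior may be adapted to $S$ arbitrarily) with the \emph{fixed-$h$} concentration, which only controls a hypothesis chosen before seeing the data. The reconciliation is the observation that, once the test function $g$ is pinned down, the only data-dependent object left to tame is the single scalar $\xi_S$; its mean is controllable precisely because the prior is independent of $S$, which legitimizes the Fubini swap and removes any need for a union bound over posteriors. The remaining care is quantitative: $\lambda = n/C^2$ must sit strictly below the critical exponent $2n/C^2$ of the squared sub-Gaussian moment generating function, which is what keeps $\xi_S$ integrable and yields the constant $\sqrt2$. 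This clean route in fact gives the sharper additive term $\ln(1/\delta)+\frac{1}{2}\ln 2$; the slightly looser $3\ln(n/\delta)+8$ recorded in the stated (McAllester) form arises from the cruder single-hypothesis tail estimate used in the original reference, whose grid/relative-entropy argument contributes the extra logarithmic-in-$n$ factor.
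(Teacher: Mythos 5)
Your proof is correct, but it is worth noting that the paper does not actually prove this statement at all: it is imported verbatim as Theorem 1 of \citet{mcallester1999pac}, and the paper's only added content is the remark that the original $[0,1]$-bounded-loss version extends to losses in $[0,C]$ by rescaling the loss by $1/C$, which is where the leading factor $C$ comes from. You instead supply a self-contained modern derivation in the Catoni/Donsker--Varadhan style: change of measure applied to $g(h)=\tfrac{n}{C^2}\Delta_S(h)^2$, sub-Gaussianity of the gap via Hoeffding's lemma, the Gaussian linearization $\exp(\lambda x^2)=\exarg{\zeta\sim\cN(0,1)}{\exp(\sqrt{2\lambda}\,\zeta x)}$ to control the squared-gap moment generating function (your arithmetic checks out: $2\lambda\sigma^2=\tfrac12$ at $\lambda=n/C^2$, $\sigma^2=C^2/(4n)$, giving $\sqrt2$), Tonelli plus Markov on $\xi_S$, and Jensen to pass from $\exarg{h\sim Q_S}{\Delta_S(h)^2}$ to the squared mean gap. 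This yields $C\sqrt{(\KL(Q_S\mid\mid P)+\ln(1/\delta)+\tfrac12\ln 2)/n}$, which is strictly sharper than the stated bound; you should make the final implication explicit, namely that $\ln(1/\delta)+\tfrac12\ln 2\le 3\ln(n/\delta)+8$ for all $n\ge 1$ and $\delta\in(0,1)$, so your inequality subsumes the one in the theorem. Your closing diagnosis is also accurate: the looser $3\ln(n/\delta)+8$ constants are artifacts of McAllester's original quantifier-reversal argument, not of any intrinsic obstruction, and your route additionally gives a two-sided bound for free. What the paper's citation route buys is fidelity to the exact constants used downstream in Theorem \ref{thm_fnn}; what your route buys is a complete, sharper, and arguably more transparent proof with no external dependency.
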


We remark that the original statement in \citet{mcallester1999pac} requires that the loss function is bounded between $0$ and $1$.
The above statement modifies the original statement and instead applies to a loss function bounded between $0$ and $C$, for some fixed constant $C > 0$.
This is achieved by rescaling the loss by $1/C$, leading to the $C$ factor in the right-hand side of equation \eqref{thm_pac}.
To invoke the above result in our setting, we set the prior distribution $P = \cN(\hat{W}^{(s)}, \sigma^2 \id)$, where $\hat{W}^{(s)}$ are the weights of the pre-trained network.
The posterior distribution $Q_S$ is centered at the fine-tuned model as $\cN(\hat{W}, \sigma^2 \id)$.
Based on the above result, we present the proof of Theorem \ref{thm_fnn}.

\begin{proof}[Proof of Theorem \ref{thm_fnn}]
	First, we show that the KL divergence between $P$ and $Q_S$ is equal to $\frac{1}{2\sigma^2}\norm{\hat W - \hat W^{(s)}}^2$.
	We expand the definition using the density of multivariate normal distributions.
	{\begin{align*}
		\KL(P \mid\mid Q_S) &= \exarg{W \sim \cP}{\log\bigbrace{\frac{\Pr(W \sim P)}{\Pr(W \sim Q_S)}}} \\
		&= \exarg{W \sim \cP}{\log{\frac{\exp(-\frac{1}{2\sigma^2} \norm{W - \hat W^{(s)}}^2)}{\exp(-\frac{1}{2\sigma^2} \norm{W - \hat{W}}^2)}}} \\
		&= -\frac{1}{2\sigma^2}\exarg{W\sim\cP}{\norm{W - \hat{W}^{(s)}}^2 - \norm{W - \hat{W}}^2} \\
		&= \frac{1}{2\sigma^2}\exarg{W\sim\cP}{\inner{\hat{W}^{(s)} -\hat{W}}{2W - \hat{W}^{(s)} - \hat{W}}} \\
		&= \frac{1}{2\sigma^2}\normFro{\hat{W} - \hat{W}^{(s)}}^2 \le \frac{\sum_{i=1}^L D_i^2}{2\sigma^2},
	\end{align*}}%
	where the last line uses the fact that $\normFro{\hat{W}_i - \hat{W}_i^{(s)}} \le D_i$, for all $1\le i\le L$.

	Next, let the dimension of $W_i$ be $d_{i-1}$ times $d_{i}$, for every $i = 1, 2, \dots, L$.
	In particular, $d_0$ is equal to the dimension of the input data points. 
	Let $H = \max_{0 \le i \le L} d_i$ be the maximum matrix dimension size across all the $L$ layers.
	Let $e = \sigma\sqrt{2H\log(2L\cdot H/\delta)}$.
	We show that for any $\delta > 0$ and any $L$-layer feedforward neural network parameterized by $W = [W_1, W_2, \dots, W_L]$,
	with probability at least $1 - \delta$, the perturbation $Q_S$ increases the loss function $\hat{\cL}(h)$ by at most
	\begin{align}\label{eq_perturb}
	    C_1\cdot e \cdot \bigbrace{\sum_{i=1}^L \frac{{\prod_{j=1}^L\big(\norm{W_j}_2 + e\big)}}{\norm{W_i}_2 + e}}.
	\end{align}
	Consider some data point $(x, y)$ evaluated at a neural network function $f_W$.
	Let $U_i$ be a Gaussian perturbation matrix on $W_i$ with mean zero and entrywise variance $\sigma^2$, for $i = 1,\dots,L$.
	Denote by $U = [U_1, U_2, \dots, U_L]$.
	Since $\ell(x, y)$ is 1-Lipschitz over $x$, we get
	\begin{align}\label{eq_lipschitz}
		\abs{\ell(f_{W + U}(x), y) - \ell(f_W(x), y)} \le \norm{f_{W + U}(x) - f_W(x)}_2.
	\end{align}
	{Since $U_i \in \real^{d_{i-1} \times d_i}$ is a random matrix with i.i.d. entries sampled from the standard Gaussian distribution, we can upper bound the operator norm of $U_i$ by applying well-known concentration results.
	In particular, by equation (4.1.8) and section (4.2.2) of \citet{tropp2015introduction}, we get 
	\begin{align}
	    \Pr\Big[\frac 1 {\sigma} \norm{U_i}_2 \ge  t\Big] \le (d_{i-1} + d_i) \cdot \exp\Big(-\frac{-t^2}{2\max(d_{i-1}, d_i)}\Big). \label{eq_tail_prob}
	\end{align}

	Thus, for all $i = 1, 2, \dots, L$, with probability at most $\delta / L$, we have that
	\begin{align}
	    \frac 1 {\sigma} \norm{U_i}_2 \le \sqrt{2H\log(2L \cdot H / \delta)}. \label{eq_tail_operator}
	\end{align}}%
	In particular, the right hand side above is obtained by setting $2H \exp\big(-\frac{t^2}{2H} \big) = \delta / L$ and solve for $t = \sqrt{2H \log(2L\cdot H / \delta)}$.
	Since the right-hand side of equation \eqref{eq_tail_prob} is at most $2H\exp\big(-\frac{t^2}{2H}\big)$, by the union bound, we conclude that with probability at most $1 - \delta$, for all $i = 1, 2,\dots, L$,
	the operator norm of $U_i$ is at most $\sigma \cdot \sqrt{2H \log(2L\cdot H / \delta)}$.
	
	Let $z_i$ be the input to the $i$-th layer of $f_W$.
	Let $\tilde z_i$ be the input to the $i$-th layer of $f_{W + U}$.
    We can expand the right-hand side of equation \eqref{eq_lipschitz} as
    \begin{align}
        & \bignorm{\psi_L((W_L + U_L) \tilde z_L) - \psi_L(W_L z_L)}_2 \label{eq_psi_diff_L} \\
        \le& \bignorm{(W_L + U_L) \tilde z_L - W_L z_L}_2 \tag{since $\psi_L(\cdot)$ is 1-Lipschitz}  \\
        \le& \bignorm{W_L(\tilde z_L - z_L)}_2 + \bignorm{U_L \tilde z_L}_2 \tag{by triangle inequality} \\
        \le& \norm{W_L}_2 \cdot \bignorm{\tilde z_L - z_L}_2 + {\bignorm{U_L \tilde z_L}_2} \tag{$\norm{Wx}_2 \le \norm{W}_2 \cdot \norm{x}_2$ for any vector $x$} \\
        \le& \norm{W_L}_2  \cdot \norm{\tilde z_L - z_L}_2 + \sigma\cdot \sqrt{2H\log(2L \cdot H / \delta)} \cdot \norm{\tilde z_L}_2, \label{eq_concentra}
    \end{align}
    where the last step is by equation \eqref{eq_tail_operator}.
    Recall from Section \ref{sec_setup} that the $i$-th layer takes an input $z$ and outputs $\psi_i(W_i z)$, for every $i = 1, \dots, L$.
    Because we have assumed that $\psi_i(\cdot)$ is 1-Lipschitz, for all $i = 1, \dots, L-1$, we can bound
    \begin{align*}
        \norm{\tilde{z}_L}_2
        = \phi_{L-1} \circ \phi_{L - 2} \cdots \circ \phi_1(x)
        &\le \Big(\prod_{i=1}^{L-1} \norm{W_i + U_i}_2\Big) \cdot \norm{x}_2 \\
        &\le \Big(\prod_{i=1}^{L-1} \norm{W_i + U_i}_2\Big) \cdot C_1 \tag{since $\norm{x}_2 \le C_1$ for  $x\in\cX$} \\
        &\le \Big(\prod_{i=1}^{L-1} (\norm{W_i}_2 + e) \Big)\cdot C_1. \tag{by equation \eqref{eq_tail_operator}}
    \end{align*}
    Applying the above to equation \eqref{eq_concentra}, we have shown
    \begin{align}
        &\norm{f_{W+ U}(x) - f_W(x)}_2 \nonumber \\
        \le& \norm{W_L}_2 \cdot \norm{\tilde{z}_L - z_L}_2 + \sigma \cdot \sqrt{2H \log(2L \cdot H / \delta)} \cdot \Big(\prod_{i=1}^{L-1} (\norm{W_i}_2 + e)\Big) \cdot C_1. \label{eq_rec_begin}
    \end{align}
    Next, we can expand the difference between $\tilde{z}_L = \psi_{L-1}((W_{L-1} + U_{L-1}) \tilde{z}_{L-1})$ and $z_L = \psi_{L-1}(W_{L-1} {z}_{L-1})$ similar to equation \eqref{eq_psi_diff_L}.
    In general, for any $i = 1, 2, \dots, L$, we get
    \begin{align}
        &\norm{\tilde{z}_{i} - z_i}_2 \nonumber \\
        \le& \norm{W_{i-1}}_2 \cdot \norm{\tilde{z}_{i-1} - z_{i-1}}_2 + \sigma \cdot \sqrt{2H \log(2L \cdot H / \delta)} \cdot \Big(\prod_{i=1}^{i-2} (\norm{W_i}_2 + e) \Big) \cdot C_1. \label{eq_diff_rec}
    \end{align}
    By repeatedly applying equation \eqref{eq_diff_rec} together with equation \eqref{eq_rec_begin} (relaxing $\norm{W_i}_2$ to $\norm{W_i}_2 + e$), we can show that with probability at least $1 - \delta$,
    \begin{align*}
        \norm{f_{W + U}(x) - f_W(x)}_2
        \le \sigma \cdot C_1 \sqrt{2H \log(2L \cdot H / \delta)} \cdot
        \Big(\sum_{i=1}^L \frac{\prod_{j=1}^L (\norm{W_j}_2+ e)}{ \norm{W_i}_2 + e}\Big).
    \end{align*}
    Combined with equation \eqref{eq_lipschitz}, we have shown that equation \eqref{eq_perturb} holds.
    Finally, with equation \eqref{eq_perturb} and the fact that that the loss function $\ell(\cdot)$ is bounded from above by some fixed value $C_2$, we can bound the expectation of $\hat{\cL}(h)$ for some $h \sim Q_S$ as
    \begin{align}
        \exarg{h\sim Q_S}{\hat{\cL}(h)}
        \le \hat{\cL}(f_{\hat W}) &+ (1 - \delta) \cdot \sigma \cdot C_1 \sqrt{2H \log\Big(\frac {2L \cdot H} {\delta}\Big)} \cdot \Big(\sum_{i=1}^L \frac{\prod_{j=1}^L (\norm{\hat W_j}_2 + e)}{\norm{\hat W_i}_2 + e} \Big) \nonumber \\
        &+ \delta \cdot C_2. \label{eq_delta_combo}
    \end{align}
    Note that the above inequality applies for any (small) value of $\delta$.
    Thus, we can minimize the right-hand side by appropriately setting $\delta$.
    For our purpose, we will show that there exists some $\delta$ such that the right-hand side of equation \eqref{eq_delta_combo} (leaving out $\hat{\cL}\big(f_{\hat W}\big))$ is at most
    \begin{align}\label{eq_conclude_perturb}
        2 \sigma C_1 \Big(\sum_{i=1}^L \frac{\prod_{j=1}^L (\norm{\hat W_j}_2 + e)}{\norm{\hat W_i}_2 + e}\Big) \cdot \sqrt{2H \log\big({4L\cdot H \cdot C_2}\big)}.
    \end{align}
    To see that the above is true, we will abstract away the precise scalars and instead write the right-hand side of equation \eqref{eq_delta_combo} as
    $g(\delta) = (1 - \delta) A_1 \sqrt{\log (A_2 / \delta)} + C_2 \delta$, for some fixed $A_1$ and $A_2$ that does not depend on $\delta$.
    We note that subject to
    \begin{align}\label{eq_g_constraint}
        C_2 \delta \le (1 - \delta) A_1 \sqrt{\log(A_2 /\delta)},
    \end{align}
    $g(\delta)$ is at most
    \begin{align} g(\delta) \le 2(1 - \delta) A_1 \sqrt{\log(A_2/\delta)} \le 2 A_1 \sqrt{\log\Big(\frac{A_2}{\delta}\Big)}.\label{eq_g_delta}
    \end{align}
    Therefore, we only need to lower bound $\delta$ based on the constraint \eqref{eq_g_constraint}.
    In particular, the largest possible $\delta^{\star}$ under constraint \eqref{eq_g_constraint} is achieved when both sides equal:
    \begin{align*}
        C_2 \delta^{\star} = (1 - \delta^{\star}) A_1 \sqrt{\log(A_2 /\delta^{\star})}
        \ge \frac{A_1}{2} \sqrt{\log(A_2)},
    \end{align*}
    which implies that $\delta^{\star} \ge \frac{A_1}{2C_2}\sqrt{\log(A_2)}$.
    Thus, we have shown that $g(\delta^{\star})$ is less than equation \eqref{eq_conclude_perturb}, using equation \eqref{eq_g_delta} and the lower bound on $\delta^{\star}$. Additionally, $A_1$ is on the order of a fixed constant based on $\sigma$ defined below.
    
    Using a similar argument since our bound on the perturbed loss holds point-wise for every $x\in\cX$, we can likewise prove that with probability $1 - \delta$,
    \begin{align*}
        \exarg{h\sim Q_S}{{\cL}(h)} \ge \cL(f_{\hat W}) - 2\sigma C_1 \Big(\sum_{i=1}^L \frac{\prod_{j=1}^L (\norm{\hat{W}_j}_2 + e)}{\norm{\hat{W}_i}_2 + e} \Big) \cdot \sqrt{2H \log(4L \cdot H \cdot C_2)}.
    \end{align*}
    By applying equation \eqref{eq_conclude_perturb} and the above to equation \eqref{thm_pac} in Theorem \ref{thm_pac_bayes}, we thus conclude that for any $\sigma$, with probability at least $1 - 2\delta$ over the training data set, the following holds
    \begin{align}
        \cL(f_{\hat W})
        \le \hat\cL(f_{\hat W})
        &+ 4\sigma \cdot C_1 \cdot \sqrt{2H \log(4L \cdot H \cdot C_2)} \bigbrace{\sum_{i=1}^L \frac{\prod_{j=1}^L (\norm{\hat W_j}_2 + e)}{\norm{\hat W_i}_2 + e}} \nonumber \\
        &+ C_2\sqrt{\frac{\sum_{i=1}^L \frac{D_i^2} {2\sigma^2} + 3\ln{\frac {n^{(t)}} {\delta}} + 8}{n^{(t)}}}. \label{eq_close_final}
    \end{align}
    Since $\norm{\hat W_i^{(s)}}_2 \le B_i$ and $\normFro{\hat{W}_i - \hat{W}_i^{(s)}} \le D_i$, for any $i = 1,\dots,L$, we have that $\norm{\hat{W}_i}_2 \le B_i + D_i$.
    Thus, we can upper bound the above by replacing every $\norm{\hat W_i}_2$ with $B_i + D_i$.
    
    By setting 
    \[ \sigma = \frac{\varepsilon} {6\sigma C_1   \cdot\alpha\sqrt{2H\log(4LHC_2)}}, \text{ where } \alpha = \Big(\sum_{i=1}^L\frac{\prod_{j=1}^L (B_j + D_j)}{B_i + D_i}\Big), \] 
    we get $e\le \frac 1 {6\alpha}$ since $C_1 \ge 1$.
    To finish the proof, we will show that
    {$$
    \sum_{i=1}^L \frac{\prod_{j=1}^L (B_j + D_j + e)}{B_i + D_i + e} \le 3\alpha/2.$$}%
    In particular, we will show that for every $k =1,2,\dots,L$, $e \le \frac{B_k + D_k} {6L}$.
    To see this, recall that $B_k \ge 1$ and $D_k \ge 0$.
    Therefore, $\alpha \ge L$ for every $k$ and $e \le \frac{1}{6\alpha}\le \frac{B_k + D_k} {6L}$.
    This implies
    {$$\sum_{i=1}^L \frac{\prod_{j=1}^L (B_j + D_j + e)}{(B_i + D_i + e)} \le \Big(1 + \frac 1 {6L}\Big)^{L-1} \sum_{i=1}^L \frac{\prod_{j=1}^L (B_j + D_j)}{B_i + D_i} \le \frac3 2 \alpha.$$}%
    Thus, plugging in the value of $\sigma$ above, we conclude that the second term in equation \eqref{eq_close_final} is at most $\varepsilon$.
    By applying the value of $\sigma$ to the third term in equation \eqref{eq_close_final}, we conclude
    \begin{align*}
        \cL(f_{\hat{W}}) \le& \hat{\cL}(f_{\hat{W}})
        + \varepsilon \\
        &+ C_2\sqrt{\frac{
        \frac{36}{\varepsilon^2} \cdot C_1^2 H \log(4L H C_2)\Big(\sum_{i=1}^L \frac{\prod_{j=1}^L (B_j + D_j)}{B_i + D_i}\Big)^2 \Big(\sum_{i=1}^L D_i^2\Big)
        + 3 \ln \frac{n^{(t)}}{\delta} + 8}{n^{(t)}}}.
    \end{align*}
    Thus, we have proved equation \eqref{eq_pac_main} is true. The proof is complete.
\end{proof}

\paragraph{Implication}. Our result implies a tradeoff in setting the distance parameter $D_i$. While a larger $D_i$ increases the network’s ``capacity'', the generalization bound gets worse as a result.

\paragraph{Remark.} In a follow-up paper \cite{ju2022robust}, we further improve on the result of Theorem \ref{thm_fnn} and develop a Hessian-based generalization bound for fine-tuning.
	\section{Extended Experimental Setup and Results}

\subsection{Additional details of the experimental setup}\label{sec:exp_details}

We provide further details about our experiment setup. First, we introduce the data sets in our experiments and describe the pre-processing steps. Second, we present the model architectures and the fine-tuning procedures. Third, we provide a detailed description of the baselines. Finally, we describe the implementation and the hyperparameters for our algorithm and the baselines.

\begin{table}[!t]
\centering
\caption{Top-1 test accuracy on CUB-200-2011 and Flowers dataset with independent label noise injected in the training set. Results are averaged over 3 random seeds.}\label{tab:other_noise_res}
\begin{tabular}{@{}cccccc@{}}
\toprule
\multirow{2}{*}{Datasets}    & \multirow{2}{*}{Methods} & \multicolumn{4}{c}{independent noise} \\
                             &                          & 20\%    & 40\%    & 60\%    & 80\%          \\ \midrule
\multirow{10}{*}{CUB-200-2011} & Fine-tuning  & 68.28 $\pm$ 0.34 & 56.88 $\pm$ 0.38 & 39.54 $\pm$ 0.42 & 16.21 $\pm$ 0.38 \\
                               & LS           & 69.89 $\pm$ 0.85 & 59.18 $\pm$ 0.43 & 41.58 $\pm$ 0.45 & 16.96 $\pm$ 0.45 \\
                               & $\ell^2$-PGM & 69.71 $\pm$ 0.31 & 58.91 $\pm$ 0.42 & 41.52 $\pm$ 0.74 & 16.76 $\pm$ 0.41 \\
                               & GCE          & 69.54 $\pm$ 0.25 & 60.15 $\pm$ 0.93 & 41.84 $\pm$ 0.47 & 17.77 $\pm$ 0.32 \\
                               & SCE          & 70.68 $\pm$ 0.67 & 61.33 $\pm$ 0.57 & 43.67 $\pm$ 0.37 & 17.62 $\pm$ 0.19 \\
                               & DAC          & 68.58 $\pm$ 0.25 & 57.37 $\pm$ 0.29 & 40.00 $\pm$ 0.37 & 15.92 $\pm$ 0.83 \\
                               & APL          & 70.59 $\pm$ 0.18 & 59.68 $\pm$ 0.69 & 40.46 $\pm$ 0.24 & 14.69 $\pm$ 0.36 \\
                               & SAT          & 68.69 $\pm$ 0.38 & 57.34 $\pm$ 0.18 & 38.75 $\pm$ 0.40 & 15.14 $\pm$ 0.20 \\
                               & ELR          & 69.92 $\pm$ 0.14 & 58.69 $\pm$ 0.02 & 40.76 $\pm$ 0.68 & 16.68 $\pm$ 0.79 \\ \cmidrule(l){2-6} 
                               & {\sc RegSL} (ours)         & \textbf{71.76 $\pm$ 0.49} & \textbf{62.79 $\pm$ 0.23} & \textbf{45.85 $\pm$ 0.66} & \textbf{17.88 $\pm$ 0.25} \\ \midrule
\multirow{10}{*}{Flowers}     & Fine-tuning   & 83.13 $\pm$ 0.15 & 72.23 $\pm$ 0.40 & 55.27 $\pm$ 0.32 & 29.35 $\pm$ 0.74 \\
                              & LS            & 83.62 $\pm$ 0.30 & 72.35 $\pm$ 0.47 & 54.23 $\pm$ 0.24 & 28.60 $\pm$ 0.45 \\
                              & $\ell^2$-PGM  & 83.45 $\pm$ 0.70 & 73.24 $\pm$ 0.20 & 56.51 $\pm$ 0.40 & 31.04 $\pm$ 0.88 \\
                              & GCE           & 83.09 $\pm$ 0.22 & 71.74 $\pm$ 0.65 & 54.73 $\pm$ 0.90 & 28.38 $\pm$ 0.95 \\
                              & SCE           & 83.45 $\pm$ 0.14 & 73.11 $\pm$ 0.05 & 55.96 $\pm$ 0.97 & 29.22 $\pm$ 0.07 \\
                              & DAC           & 83.40 $\pm$ 0.27 & 72.73 $\pm$ 0.34 & 55.27 $\pm$ 0.40 & 28.95 $\pm$ 0.89 \\
                              & APL           & 83.42 $\pm$ 0.11 & 72.06 $\pm$ 0.23 & 54.96 $\pm$ 0.36 & 28.86 $\pm$ 0.64 \\
                              & SAT           & 82.49 $\pm$ 0.25 & 72.52 $\pm$ 0.28 & 55.10 $\pm$ 0.50 & 27.82 $\pm$ 0.38 \\
                              & ELR           & 83.38 $\pm$ 0.38 & 72.53 $\pm$ 0.48 & 55.74 $\pm$ 0.42 & 30.17 $\pm$ 0.26 \\ \cmidrule(l){2-6} 
                              & {\sc RegSL} (ours)          & \textbf{83.79 $\pm$ 0.39} & \textbf{73.32 $\pm$ 0.66} & \textbf{57.52 $\pm$ 0.15} & \textbf{31.09 $\pm$ 0.25} \\ \bottomrule
\end{tabular}
\end{table}

\textbf{Datasets.} 
We evaluate fine-tuning on seven image classification data sets covering multiple applications, including fine-grained object recognition, scene recognition, and general object recognition.
We divide each dataset into a training set, a validation set, and a test set for data splitting.
We adopt the standard splitting of data given in the Aircraft dataset. For the Caltech-256 dataset~\cite{griffin2007caltech}, we use the setting described in~\citet{li2018explicit}, which randomly samples 30, 20, and 20 images of each class for the training, validation, and test sets, respectively. For other datasets, we split $10\%$ of the training set as the validation set and use the standard test set. We describe the statistics of the seven data sets in Table \ref{tab: dataset_statistics}.
When fine-tuning on the seven data sets, we preserve the original pixel and resize the shorter side to 256 pixels. The image samples are normalized with the mean and standard deviation values over ImageNet data~\cite{russakovsky2015imagenet}. Moreover, we apply commonly used data augmentation methods on the training samples, including random scale cropping and random flipping. The images are resized to $224 \times 224$ for use as the model's input. The training and batch sizes are both $16$.

To test the robustness of fine-tuning, we evaluate Algorithm \ref{alg:rnc} under two different settings with label noise. 
We select MIT-Indoor dataset~\cite{sharif2014cnn} as the benchmark dataset and randomly flipped the labels of the training samples. The results for the other datasets are similar, as shown in Appendix \ref{sec:extended_exp}.
We consider two scenarios of label noise in our experiments, independent random noise and correlated noise. Random label noise is generated by uniformly flipping the labels of a given proportion of training samples to other classes. For the correlated noise setting, the label noise is dependent on the sample. We simulate the correlated noisy label by training an auxiliary network on a held-out dataset to a certain accuracy. We then use the prediction of the auxiliary network as noisy labels. 
For few-shot learning, we conduct experiments on the miniImageNet~\cite{vinyals2016matching} few-shot image recognition benchmark and follow the meta-training and meta-test setup described in \citet{tian2020rethinking}.

\textbf{Architectures.} For image data sets, we use ResNet-101 network which is pre-trained on ImageNet dataset~\cite{russakovsky2015imagenet}.
For the label-noise experiments, we use the ResNet-18 network. In the transfer learning and label-noise experiments, we fine-tune the model using the Adam optimizer with an initial learning rate of $0.0001$ for $30$ epochs and decay the learning rate by $0.1$ every $10$ epochs. We report the average Top-1 accuracy on the test set of $3$ random seeds.
For the few-shot learning tasks, we use ResNet-12, pre-trained on the meta-training dataset, as in previous work~\cite{tian2020rethinking}.
To fine-tune on the meta-test sets, we use Adam optimizer with an initial learning rate $5e^{-5}$ and fine-tune the model on the training set for $25$ epochs. We report the average classification accuracies on 600 sampled tasks from the meta-test split. 

\textbf{Baselines.}
For the transfer learning tasks, we focus on using the Frobenius norm for distance regularization. We present ablation studies to compare the Frobenius norm and other norms in Appendix \ref{sec:extended_exp}.
we compare our algorithm with fine-tuning with early stop~(Fine-tuning), fine-tuning with weight decay~($\ell^2$-Norm), label smoothing~(LS) $\ell^2$-SP~\cite{li2018explicit}, and $\ell^2$-PGM~\cite{gouk2020distance}.
For testing the robustness of our algorithm, in addition to the baselines described above, we adopt several baselines that have been proposed in the supervised learning setting, including GCE~\cite{zhang2018generalized}, SCE~\cite{wang2019symmetric}, DAC~\cite{thulasidasan2019combating}, APL~\cite{ma2020normalized}, ELR~\cite{liu2020early} and self-adaptive training (SAT)~\cite{huang2020self}.
We compare our results with the same fine-tuning baselines used in transfer learning for few-shot classification tasks.

\begin{table}[!t]
\centering
\caption{Left: Mean AUROC of fine-tuning ResNet-18 on the ChestX-ray14 dataset. Right: Top-1 accuracy of fine-tuning ViT on the indoor dataset. Results are averaged over 3 random seeds.}
\begin{minipage}{0.48\textwidth}
\centering
\subcaption{The ChestX-ray14 dataset.}\label{tab:chexnet_results}
\begin{small}
\begin{tabular}{@{}lc@{}}
    \toprule
    Methods & Mean AUROC  \\ \midrule
    Fine-tuning   & 0.8159 $\pm$ 0.0667             \\
    $\ell^2$-Norm & 0.8198 $\pm$ 0.0644 \\
    LS            & 0.7885 $\pm$ 0.0578 \\
    $\ell^2$-SP   & 0.8231 $\pm$ 0.0658\\
    
    $\ell^2$-PGM   & 0.8235 $\pm$ 0.0636           \\ \midrule
    {\sc RegSL} (ours)          & \textbf{0.8274 $\pm$ 0.0654}      \\ \bottomrule
    \end{tabular}
\end{small}
\end{minipage}\hfill
\begin{minipage}{0.48\textwidth}
\centering
\subcaption{The indoor dataset.}\label{tab:vit_results}
\begin{small}
\begin{tabular}{@{}lcc@{}}
    \toprule
    \multirow{2}{*}{Methods} & \multicolumn{2}{c}{Independent Noise} \\ \cmidrule(l){2-3} 
                             & 40\%     & 80\%   \\ \midrule
    Fine-tuning   & 75.87 $\pm$ 1.15 &  32.06 $\pm$ 3.17              \\
    Regularization   & 82.66 $\pm$ 0.44  &  63.01 $\pm$ 0.83             \\
    Self-labeling  & 79.13 $\pm$ 0.37 &  41.69 $\pm$ 0.16              \\ \midrule
    {\sc RegSL} (ours)          & \textbf{83.48 $\pm$ 0.29}  &  \textbf{64.50 $\pm$ 0.53}              \\ \bottomrule
    \end{tabular}
\end{small}
\end{minipage}
\end{table}

\begin{table*}[!t]
\centering
\caption{Top-1 accuracy of fine-tuning a three-layer feed forward network pre-trained on SST with no label noise and with independent label noise. Results are averaged over 5 random seeds.}\label{tab:text_res}
\begin{small}
\begin{tabular}{lccccc}
\toprule
Noise Rate 0\%  & MR & CR & MPQA & SUBJ & TREC  \\ \midrule
Fine-tuning   & 83.37$\pm$0.70 & 83.29$\pm$0.80 & 87.56$\pm$0.70 & 93.14$\pm$0.42 & 83.28$\pm$0.86 \\
$\ell^2$-PGM  & 84.16$\pm$0.41 & 83.87$\pm$0.66 & 87.77$\pm$0.62 & 93.16$\pm$0.21 & \textbf{84.48$\pm$0.52} \\ 
{\sc RegSL} (ours)          & \textbf{84.20$\pm$0.47} & \textbf{84.35$\pm$0.60} & \textbf{87.95$\pm$0.65} & \textbf{93.50$\pm$0.17} & 83.73$\pm$0.75  \\ \midrule
Noise Rate 40\%  & MR & CR & MPQA & SUBJ & TREC  \\ \midrule
Fine-tuning   & 82.91$\pm$0.25 & 77.67$\pm$1.11 & 82.85$\pm$0.98 & 90.78$\pm$0.88 & 70.80$\pm$0.22 \\
$\ell^2$-PGM  & 83.45$\pm$0.28 & 79.47$\pm$0.37 & 84.03$\pm$0.41 & 72.14$\pm$0.21 & 73.48$\pm$0.90 \\
{\sc RegSL} (ours)          & \textbf{83.54$\pm$0.40} & \textbf{79.89$\pm$0.51} & \textbf{84.15$\pm$0.99} & \textbf{72.48$\pm$0.45} & \textbf{74.80$\pm$0.87}  \\ \bottomrule
\end{tabular}
\end{small}
\end{table*}

\textbf{Implementation and hyperparameters.} 
For all baselines and datasets, all regularization hyperparameters are optimized using the Optuna optimization framework on the validation dataset.
In our proposed algorithm, we search the constraint distance $D$ in $[0.05, 10]$ and the distance scale factor $\gamma$ in $[1, 5]$ by sampling.
For setting layer-wise constraints, we set four different constraints for the four blocks of the ResNet. Specifically, ResNet-101 consists of four blocks, each with 10, 13, 67, and 10 convolutional layers, respectively. For every block, we set the same constraint value for all the layers within the block. Therefore, there are four distance values $(D, D\cdot\gamma, D\cdot\gamma^2, D\cdot\gamma^3)$ for each block. Similar constraints are applied to the ResNet-18 and ResNet-12 models in the experiments.
The search space for other hyperparameters is shown as follows. 
The reweighting step begins at $E_r$, which is searched in the set $\{3, 5, 8, 10, 13\}$.
The reweighting temperature factor $\gamma$ is searched in $\{3.0, 2.0, 1.5, 1.0\}$.
Label correction start step $E_c$ is searched in $\{5, 8, 10, 13, 15\}$. 
The label correction threshold $p$ is set as $0.90$ in all experiments. The validation set sizes used for evaluating these hyperparameter choices range from 536 to 5120 (cf. Table \ref{tab: dataset_statistics}). 
For the results in Table \ref{tab:finetuning_res}, we search for 20 different trials of the hyperparameters for the proposed algorithm and the baselines. 
For the results in Table \ref{tab:noise_res}, we search 20 times on the self-labeling parameters and 20 times on the regularization parameters. For the few-shot learning experiments in Table \ref{tab:few_shot}, we use the validation set of Mini-ImageNet, following the training procedure of \citet{dhillon2019baseline} and search for 20 different trials.

We report the results for baselines by running the official open-sourced implementations. We describe the hyperparameter space for baselines as follows. 
For GCE~\cite{zhang2018generalized},  we search the factor $q$ in their proposed truncated loss $\cL_q$ in $\{0.4, 0.8, 1.0\}$ and set the factor $k$ as 0.5. We also search the start pruning epoch in $\{3, 5, 8, 10, 13\}$.   
For SCE~\cite{wang2019symmetric}, we search the $\alpha$ and $A$ in their proposed symmetric cross entropy loss. The factor $\alpha$ is searched in $\{0.01, 0.10, 0.50, 1.00\}$, and the factor $A$ is searched in $\{-2, -4, -6, -8\}$.
For DAC~\cite{thulasidasan2019combating}, we search the start abstention epoch in $\{3, 5, 8, 10, 15\}$ and set the other hyperparameters as in their paper. 
For APL~\cite{ma2020normalized}, we choose the active loss as normalized cross-entropy loss and the passive loss as reversed cross-entropy loss. We search the loss factor $\alpha$ in $\{1, 10, 100\}$ and $\beta$ in $\{0.1, 1.0, 10\}$. 
For ELR~\cite{liu2020early}, we search the momentum factor $\beta$ in $\{0.5, 0.7, 0.9, 0.99\}$ and the weight factor $\lambda$ in $\{0.05, 0.3, 0.5, 0.7\}$.
For SAT~\cite{huang2020self}, the start epoch is searched in $\{3, 5, 8, 10, 13\}$, and the momentum is searched in $\{0.6, 0.8, 0.9, 0.99\}$. We use the same number of trials in tuning hyperparameters for baselines.

\begin{table*}[!t]
\centering
\caption{Comparing $\sum_{i=1}^L D_i^2$ between layer-wise and constant regularization.}\label{tab:distances}
\resizebox{1.00\columnwidth}{!}{
\begin{tabular}{@{}lccccccc@{}}
\toprule
           & Aircrafts & CUB-200-2011 & Caltech-256 & Stanford-Cars & Stanford-Dogs & Flowers & MIT-Indoor \\ \midrule
Layer-wise & 250.20   &  840.09  & 90.42  & 4150.02 & 20.72  &  252.45  & 57.16 \\
Constant   & 2465.05  &  1225.69 & 195.72 & 2480.08 & 211.78 &  3151.28 & 295.49 \\ \bottomrule
\end{tabular}}
\end{table*}

\subsection{Extended experimental results}\label{sec:extended_exp}

We present additional results. First, we apply our algorithm to another dataset with label noise (of the same type as in Table \ref{tab:noise_res}). Second, we apply our algorithm to sentence classification tasks.

\paragraph{Additional results under label noise.}
We report the test accuracy results of our algorithm on the CUB-200-2011 and Flowers datasets with independent label noise in Table \ref{tab:other_noise_res}. From the table, we show that our algorithm still outperforms previous methods by a significant margin, which aligns with our results of the MIT-Indoor data set in Table \ref{tab:noise_res}. 

\paragraph{Results on ChestX-ray14.} We apply our approach to medical image classification, which is a more distant transfer task (relative to the source data set ImageNet).  
We report the mean AUROC (averaged over predicting all 14 labels) on the ChestX-ray14 data set in Table~\ref{tab:chexnet_results}. With layer-wise regularization, we see $\bm{0.39\%}$ improvement compared to the baseline methods.

\paragraph{Results on ViT.} We apply our approach to fine-tuning pre-trained ViT~\cite{dosovitskiy2020image} from noisy labels, under the same setting as Table~\ref{tab:noise_res}). 
Table~\ref{tab:vit_results} shows the result.
First, we find that our approach outperforms the best regularization methods by $\bm{1.17\%}$, averaged over two settings.
Second, we find that our approach also improves upon self-labeling by $\bm{13.57\%}$ averaged over two settings.
These two results again highlight that both regularization and self-labeling contribute to the final result.
While regularization prevents the model from over-fitting to the random labels, self-labeling injects the belief of the fine-tuned model into the noisy dataset.

\paragraph{Extension to text classification.}
We apply our algorithm in text data domains. We conduct an experiment on sentiment classification tasks using a three-layer feedforward neural network (or multi-layer perceptron). We considered six text classification data sets: SST, MR, CR, MPQA, SUBJ, and TREC. We use SST as the source task and one of the other tasks as the target task. We compared our proposed algorithm with fine-tuning and $\ell_2$-PGM~\cite{gouk2020distance}. In Table \ref{tab:text_res}, we report the results evaluated under a setting with no label noise and a setting with independent label noise (same setting in Table \ref{tab:noise_res}). The results show our proposed algorithm can outperform these baseline methods under both settings.

\begin{figure*}[!t]
    \begin{subfigure}[b]{0.49\textwidth}
    	\centering
	    \includegraphics[width=0.98\textwidth]{./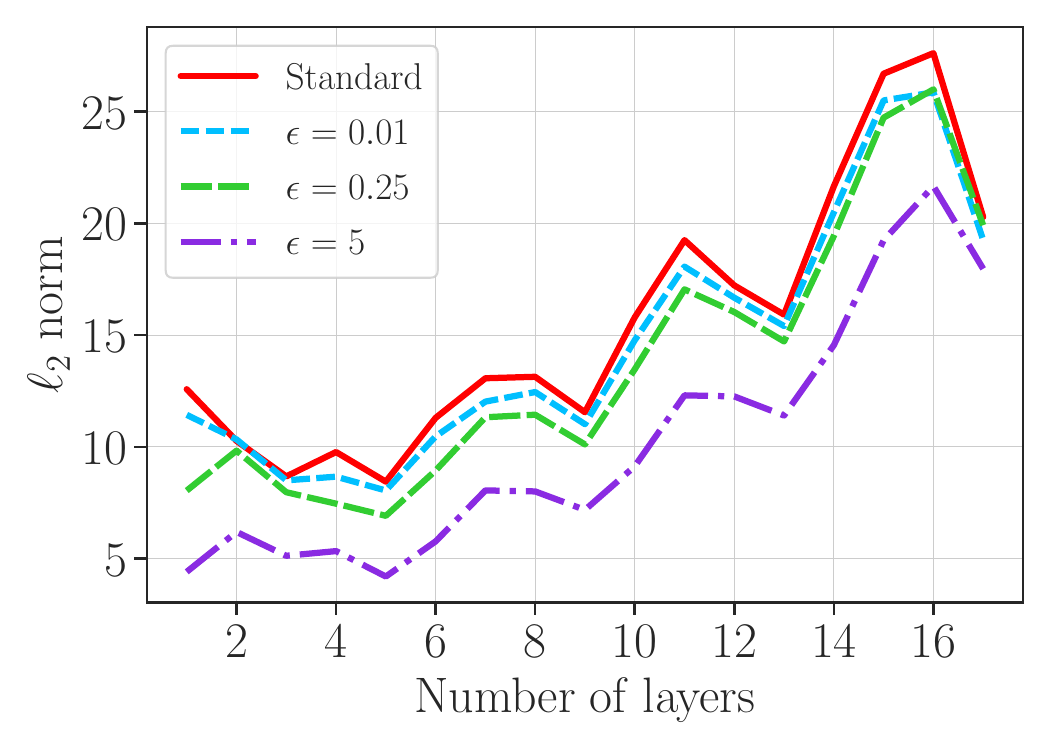}
	    \caption{Norms of each layer in ResNet-18}
    \end{subfigure}
    \begin{subfigure}[b]{0.49\textwidth}
        \centering
        \includegraphics[width=0.98\textwidth]{./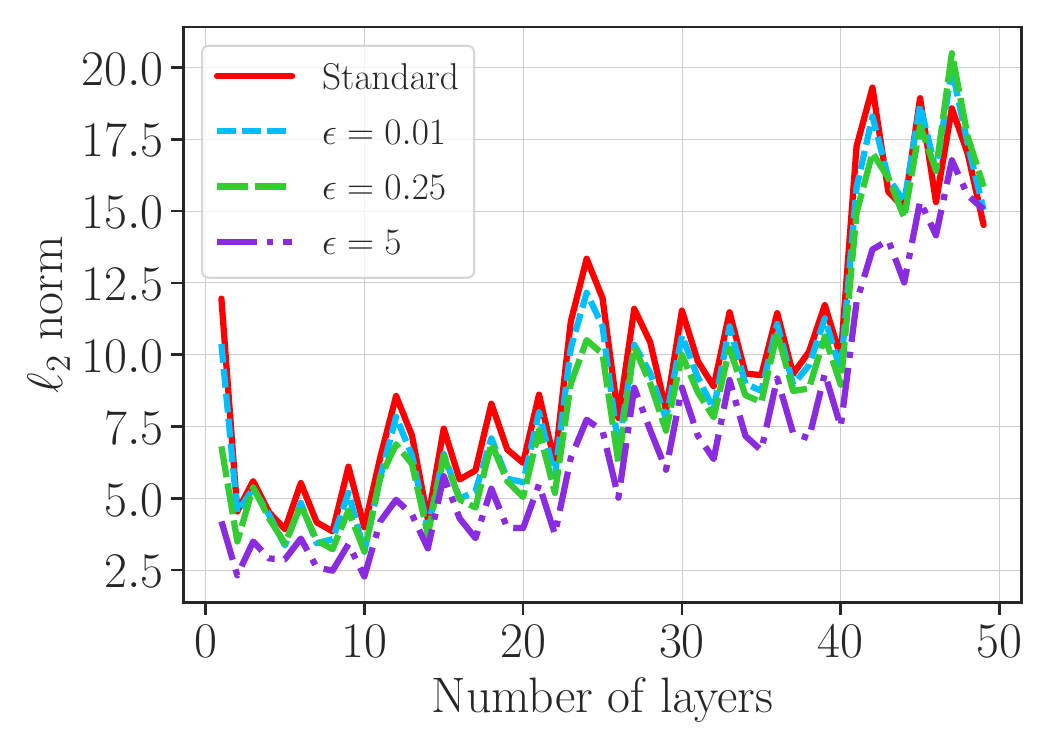}
        \caption{Norms of each layer in ResNet-50}
    \end{subfigure}
    \caption{Frobenius norms of each layer in ResNet-18 pre-trained with standard and adversarial training methods from different noise levels ($\epsilon = 0.01, 0.25, 5$). Higher $\epsilon$ implies lower norms. }\label{fig:robust_model_norm}
\end{figure*}

\subsection{Ablation studies}\label{sec:ablation_study}

We describe several ablation studies. First, we compare the performance between using the Frobenius norm and the MARS norm proposed in~\citet{gouk2020distance}. 
Second, we show the norm values of the adversarial robust pre-trained models in each layer and explain their improvement for fine-tuning on downstream tasks. 
Third, we compare our proposed layer-wise constraints with a constant and discuss their correlation with generalization performance in Theorem \ref{thm_fnn}.
Fourth, we analyze the role of regularization in our proposed self-label-correction method.
Finally, we conduct an ablation study to study the influence of three components in our algorithm. 

\paragraph{Comparing layer-wise and constant regularization: $\sum_{i=1}^L D_i^2$.} To show the bound in Proposition \ref{thm_fnn} is nonvacuous, We compare the value of $\sum_{i=1}^L D_i^2$ from Equation \ref{eq_pac_main} under both layer-wise constraints and constant regularization in Table \ref{tab:distances}. 
The values for every $D_i$ are from the parameters set in Table~\ref{tab:finetuning_res} for these data sets. We can see that layer-wise constraints incur a smaller value of $\sum_{i=1}^L D_i^2$ compared to constant regularization, which correlates with their better generalization performance.

\paragraph{Comparing norms of adversarial robust pre-trained models.}
We empirically observe that adversarially robust pre-trained models have lower Frobenius norms over each layer, as shown in Figure \ref{fig:robust_model_norm}. The lower norms of robust models would induce smaller generation error, which explains their improvement in fine-tuning downstream tasks. 

\begin{table}[!t]
\centering
\caption{Denominator (total number of relabeled data points) and numerator (the number of correctly relabeled data points) for calculating  precision of label-correction in Figure \ref{fig:label_correction}.}\label{tab:numerator}
\begin{tabular}{@{}lllllllll@{}} \toprule
    Number of epoch                & 1  & 4  & 7  & 10 & 13 & 16 & 19 & 21 \\ \midrule
    Denominator with regularization  & 28 & 32 & 32 & 29 & 26 & 24 & 21 & 19 \\
    Numerator with regularization    & 15 & 17 & 16 & 13 & 12 & 10 & 9  & 8  \\
    Denominator without regularization & 56 & 66 & 64 & 64 & 62 & 58 & 59 & 56 \\
    Numerator without regularization   & 34 & 33 & 28 & 26 & 21 & 18 & 17 & 14 \\ \bottomrule
\end{tabular}
\end{table}

\begin{table}[!t]
\centering
\caption{Top-$1$ test accuracy of using different norms for fine-tuning  ResNet-101  pre-trained on the ILSVRC-2012 subset of ImageNet. Results are averaged over 3 random seeds.}\label{tab:compare_norm}
\begin{scriptsize}
\begin{tabular}{lccccccc}
\toprule
              & Aircrafts & CUB-200-2011 & Caltech-256 & Stanford-Cars & Stanford-Dogs & Flowers & MIT-Indoor \\ \midrule
PGM (MARS)      & 74.34$\pm$0.15 & 81.14$\pm$0.16 & 83.28$\pm$0.31 & 89.01$\pm$0.18 & 87.90$\pm$0.10 & 93.45$\pm$0.23 & 78.48$\pm$0.29 \\
PGM ($\ell^2$)  & 74.90$\pm$0.26 & 81.23$\pm$0.32 & 83.25$\pm$0.33 & 88.92$\pm$0.39 & 86.48$\pm$0.28 & 93.23$\pm$0.34 & 77.31$\pm$0.30 \\ 
Ours (MARS)     & 75.10$\pm$0.32 & 82.03$\pm$0.32 & 85.33$\pm$0.33 & 89.29$\pm$0.25 & 90.94$\pm$0.28 & 93.67$\pm$0.34 & 79.40$\pm$0.30 \\
Ours ($\ell^2$) & 75.32$\pm$0.23 & 82.24$\pm$0.21 & 84.90$\pm$0.16 & 89.14$\pm$0.22 & 89.58$\pm$0.13 & 93.82$\pm$0.35 & 79.30$\pm$0.31 \\ \bottomrule
\end{tabular}
\end{scriptsize}
\end{table}

\paragraph{Comparing label-correction precision with and without regularization.}  We observe from Figure \ref{fig:label_correction} that combining self-labeling and regularization is more effective than using just self-labeling. This is evidenced by the gap between the correction with regularization and without regularization in the figure.
In Table \ref{tab:numerator}, we show the denominator of relabeling precision (i.e., the overall number of data points that are relabeled in line 7 of Algorithm \ref{alg:rnc}) in the correction process. We find that the denominator differs between models with and without regularization.
The denominator is significantly smaller with regularization than without, indicating that regularization prevents the model from overfitting the noisy labels.
The denominator decreases only with regularization. Without regularization, the denominator remains relatively unchanged. Additionally, the number of incorrectly labeled data points by the self-labeling is much higher (e.g., 42 vs. 11 at epoch 21).

\paragraph{Comparing $\bm{\ell_2}$ norm and the MARS norm.}
We report the results of using different norms for constraints in our algorithm in Table \ref{tab:compare_norm}. We compare the performance of the Frobenius norm ($\ell^2$), and the MARS norm proposed in~\cite{gouk2020distance}. The table shows that the results of the two norms are similar. In the paper, we focus on the Frobenius norm for our discussion. 

\paragraph{Influence of different components in {\sc RegSL}.}
We study the influence of each component of our algorithm: layer-wise constraint, label correction, and label removal. We remove these components, respectively, and run the same experiments on the MIT-Indoor dataset with different kinds of label noise. Furthermore, we also include a row of results using only layer-wise constraints without self-labeling (containing label correction and removal). 
As shown in Table \ref{tab:noise_ablation}, removing any component from our algorithm can harm its performance. This suggests that incorporating only these components can prevent both overfitting and label memorization in models. In addition, we can see from Table \ref{tab:noise_ablation} that when the noise rate is $20\%$, the self-labeling part (including label correction and removal) is more critical than regularization. When the noise rate is $40\%$ or higher, the label correction part is the most important.

\begin{table}[!t]
\centering
\caption{Removing any component in our algorithm leads to worse performance. Results are on the Indoor dataset with independent label noise. Results are averaged over 3 random seeds.}\label{tab:noise_ablation}
\resizebox{1.00\columnwidth}{!}{
\begin{tabular}{@{}lcccccc@{}}
\toprule
\multirow{2}{*}{Indoor} & \multicolumn{4}{c}{independent noise} & correlated noise \\
                      & 20\%    & 40\%    & 60\%    & 80\%    & 25.18\%          \\ \midrule
{\sc RegSL} (ours)                   & \textbf{72.51$\pm$0.46} & \textbf{68.13$\pm$0.16} & \textbf{57.59$\pm$0.55} & \textbf{34.08$\pm$0.79} & \textbf{70.12$\pm$0.83} \\
w/o regularization     & 71.94$\pm$0.43 & 67.84$\pm$0.38 & 57.24$\pm$0.43 & 33.78$\pm$0.30 & 69.43$\pm$0.36 \\ 
w/o label correction   & 70.92$\pm$0.41 & 59.10$\pm$0.24 & 47.81$\pm$0.35 & 28.42$\pm$0.46 & 69.78$\pm$0.34 \\ 
w/o label removal      & 70.32$\pm$0.65 & 66.57$\pm$0.76 & 55.37$\pm$0.28 & 29.43$\pm$0.88 & 67.96$\pm$0.49 \\ 
w/o self-labeling      & 70.23$\pm$0.25	& 64.40$\pm$0.58 & 54.20$\pm$0.68	& 32.54$\pm$0.43 & 69.05$\pm$0.09 \\
\bottomrule
\end{tabular}}
\end{table}

\end{document}